\def\maxwidth{ %
  \ifdim\Gin@nat@width>\linewidth
    \linewidth
  \else
    \Gin@nat@width
  \fi
}
\definecolor{fgcolor}{rgb}{0.345, 0.345, 0.345}
\definecolor{shadecolor}{rgb}{.97, .97, .97}
\definecolor{messagecolor}{rgb}{0, 0, 0}
\definecolor{warningcolor}{rgb}{1, 0, 1}
\definecolor{errorcolor}{rgb}{1, 0, 0}
\newenvironment{knitrout}{}{} 
\newcommand{\app}[1]{Appendix~\ref{app:#1}}
\newcommand{\lem}[1]{Lemma~\ref{lem:#1}}
\newcommand{\prop}[1]{Proposition~\ref{prop:#1}}
\newcommand{\mysec}[1]{Section~\ref{sec:#1}}
\newcommand{\mysecs}[1]{Sections~\ref{sec:#1}}
\newcommand{\mysecss}[1]{\ref{sec:#1}}
\newcommand{\eq}[1]{Eq.~(\ref{eq:#1})}
\newcommand{\eqs}[1]{Eqs.~(\ref{eq:#1})}
\newcommand{\eqss}[1]{(\ref{eq:#1})}
\newcommand{\eqw}[1]{Eq.~(#1)}
\newcommand{\fig}[1]{Fig.~(\ref{fig:#1})}
\newcommand{\npp}{\tilde{\eta}} 
\newcommand{\npq}{\eta} 
\newcommand{\mpq}{m} 
\newcommand{\gauss}{\mathcal{N}} 
\newcommand{\truecov}{\Sigma} 
\newcommand{\vbcov}{V} 
\newcommand{\constant}{C} 
\theoremstyle{plain}
\newtheorem{theorem}{Theorem}[section]
\newtheorem{proposition}[theorem]{Proposition}
\newtheorem{lemma}[theorem]{Lemma}
\newcommand{\kl}{\textrm{KL}}
\DeclareMathOperator*{\argmin}{arg\,min}
\newcommand{\mbe}{\mathbb{E}}
\newcommand{\cov}{\textrm{Cov}}
  \icmltitlerunning{Covariance Matrices for Mean Field Variational Bayes}
  \title{Covariance Matrices and Influence Scores for Mean Field Variational Bayes}
  \author{
    Ryan Giordano\\
    Department of Statistics\\
    University of California, Berkeley\\
    Berkeley, CA 94720 \\
    \texttt{rgiordano@berkeley.edu}
    \and
    Tamara Broderick \\
    Department of EECS,\\
    Massachusetts Institute of Technology\\
    Cambridge, MA 02139\\
    \texttt{tbroderick@csail.mit.edu}
  }
\begin{document}

\iftoggle{icmlformat} {
  \twocolumn[
  \icmltitle{Covariance Matrices and Influence Scores for Mean Field Variational Bayes}

  \icmlauthor{Ryan Giordano}{rgiordano@berkeley.edu} 
  \icmladdress{Department of Statistics,
               University of California, Berkeley
               Berkeley, CA 94720}
  \icmlauthor{Tamara Broderick}{tbroderick@csail.mit.edu}
  \icmladdress{
  Department of Electrical Engineering and Computer Science,
  Massachusetts Institute of Technology
  Cambridge, MA 02139}
  \icmlkeywords{keywords go here}
  ]
}{
  \maketitle
}

\begin{abstract} 
Mean field variational Bayes (MFVB) is a popular posterior approximation
method due to its fast runtime on large-scale data sets. However, it is well
known that a major failing of MFVB is that it underestimates
the uncertainty of model variables (sometimes severely)
and provides no information about model variable
covariance. We develop a fast, general methodology for exponential families
that augments MFVB to deliver accurate uncertainty estimates for model
variables---both for individual variables and coherently across variables.
MFVB for exponential families defines a fixed-point equation in the means of
the approximating posterior, and our approach yields a covariance estimate by
perturbing this fixed point. Inspired by linear response theory, we call our
method linear response variational Bayes (LRVB). 
We also show how LRVB can be used to quickly calculate a measure
of the influence 
of individual data points on parameter point estimates.
We demonstrate the accuracy
and scalability of
our method by learning Gaussian mixture models for both simulated and real data.
\end{abstract}


\section{Introduction}\label{sec:intro}

With increasingly efficient data collection methods, scientists are
interested in quickly analyzing ever larger data sets. 
In particular, the promise of these large data sets is not simply to fit 
old models but instead to learn more nuanced patterns from data
than has been possible in the past.
In theory, the Bayesian paradigm promises exactly these desiderata.
Hierarchical modeling allows practitioners to capture complex relationships
between variables of interest. Moreover, Bayesian analysis allows practitioners
to quantify the uncertainty in any model estimates---and to do so coherently
across all of the model variables.

\emph{Mean field variational Bayes} (MFVB), a method for approximating
a Bayesian posterior distribution, has grown in 
popularity due to its fast runtime on large-scale data sets
\cite{blei:2003:lda, blei:2006:dp, hoffman:2013:stochastic}.
But it is well known
that a major failing of MFVB is that it gives underestimates
of the uncertainty of model variables that can be 
arbitrarily bad, even when approximating a simple multivariate Gaussian distribution
\citep{mackay:2003:information,bishop:2006:pattern,turner:2011:two},
%
and provides no information about how
the uncertainties in different model variables interact
\cite{wang:2005:inadequacy, bishop:2006:pattern, rue:2009:approximate, turner:2011:two}.
We develop a fast, general methodology for exponential families that augments MFVB
to deliver accurate uncertainty estimates for model variables---both for individual
variables and coherently across variables.
In particular, as we elaborate in \mysec{mfvb},
MFVB for exponential families defines a fixed-point equation in the means
of the approximating posterior, and our approach yields a covariance estimate
by perturbing this fixed point.
The perturbations of \emph{linear response theory}
have previously been applied for machine learning by \cite{kappen:1998:efficient}
and specifically for mean-field methods by \cite{opper:2003:variational}
and \cite{welling:2004:linear}.
Our contribution is to use exponential families to 
derive particularly simple and scalable formulas for covariance estimation and
to develop a method to quickly calculate \emph{influence scores},
which measure the influence 
of individual data points on parameter point estimates.
We call our method \emph{linear response variational Bayes} (LRVB).

We demonstrate the accuracy and scalability of our 
LRVB covariance estimates with experiments
that focus on finite mixtures of multivariate Gaussians,
which have historically been a sticking point for MFVB
covariance estimates \cite{bishop:2006:pattern,turner:2011:two}.
We employ simulated data as well as the MNIST handwritten digit data set \cite{mnist:lecun1998gradient}.
We show that the LRVB variance estimates are nearly identical to
those produced by a Markov Chain Monte Carlo (MCMC) sampler,
even when MFVB variance is dramatically underestimated.
For these mixture models, we show that LRVB gives
accurate covariance estimates orders of magnitude faster
than MCMC on a wide range of problems. We demonstrate both theoretically and
empirically that, for this Gaussian mixture model, LRVB scales linearly in the number
of data points and approximately quadratically in the dimension of the parameter
space.  Finally,
we show how LRVB allows fast computation
of the influence scores mentioned above.

\section{Mean-field variational Bayes in exponential families} \label{sec:mfvb}

Denote our $N$ observed data points by the $N$-long column vector $x$,
and denote our unobserved model parameters by $\theta$. Here, $\theta$ is a column
vector residing in some space $\Theta$; it has $J$ subgroups and
total dimension $D$. Our model is specified by a distribution of the
observed data given the model parameters---the likelihood $p(x | \theta)$---and
a prior distributional belief on the model parameters
$p(\theta)$. Bayes' Theorem yields the posterior $p(\theta | x)$.

MFVB approximates $p(\theta | x)$ by a factorized distribution of the
form $q(\theta) = \prod_{j=1}^{J} q(\theta_{j})$ such that the
Kullback-Liebler divergence $\kl(q || p)$ between $q$ and $p$ is
minimized:
\begin{align*}
  q^{*} &:= \argmin_{q} \kl(q || p)\\
  &= \argmin_{q} \mbe_{q} \left[ \log p(\theta | x) - \sum_{j = 1:J} \log q(\theta_{j}) \right].
\end{align*}
By the assumed $q$ factorization, the solution to this
minimization obeys the following fixed point equations \cite{bishop:2006:pattern}:
\begin{equation}
	\label{eq:kl_div}
	\log q^{*}_{j}(\theta_{j}) = \mbe_{q^{*}_{i}: i \in [J]\setminus j} \log p(\theta, x) + \constant.
\end{equation}

Here, and for the rest of the text, $\constant$ denotes a constant and
$[J] := \{1,\ldots, J\}$.
For index $j$, suppose that
$p(\theta_{j} | \theta_{i \in [J]\setminus j}, x)$ is in natural exponential family form:
\begin{equation}
  \label{eq:variational_exp_def}
  p(\theta_{j} | \theta_{i \in [J]\setminus j}, x) = \exp(\npp_j^{T} \theta_{j} - A_j(\npp_j))
\end{equation}
with local natural parameter $\npp_j$ and local log partition function $A_j$.
Here, $\npp_j$ may be a function of $\theta_{i \in [J]\setminus j}$ and $x$.
If the exponential family assumption above holds for every index $j$,
then we can write $\npp_{j}$ as a sum of products of components 
of each $\theta_k$ vector:
\begin{equation}
  \label{eq:nat_param_sum_of_theta_prods}
\npp_{j} =
  \sum_{r \in R_j} G_r \prod_{k\in[J]\setminus j} \theta_{kr_k},
\end{equation}
where  $G_r$ is a $D_j$-sized column vector and $\theta_{kr_k}$ is a scalar.
Here, $r$ is a vector of length $J-1$. Each entry $r_k$ of $r$ is either
$\emptyset$ or an index in $[D_k]$. If $r_k = \emptyset$, then $\theta_{kr_k} = 1$;
otherwise, $\theta_{kr_k}$ is the $r_k$th element of the vector $\theta_k$. 
This notation scheme guarantees that each product contains at most one
factor from the vector $\theta_k$ for each index $k$.
In particular, the log likelihood is linear
in every vector $\theta_j$.  This property of the log likelihood
is guaranteed by \eq{variational_exp_def}.  \app{exp_fams}
contains further details and a proof of \eq{nat_param_sum_of_theta_prods}.

It follows from \eqs{kl_div}, \eqss{variational_exp_def},
\eqss{nat_param_sum_of_theta_prods}, and the assumed factorization of $q^{*}$ that
$\log q^{*}_{j}(\theta_{j})$ has the form
\begin{equation} \label{eq:exp_approx_marg}
          \left( \sum_{r \in R_j} G_r \prod_{k \in [J] \setminus j}
                    \mbe_{q^{*}_r} [\theta_{kr_k}] \right)^{T} \theta_{j} + \constant.
\end{equation}
We see that $q^{*}_{j}$ is in the same
exponential family form
as $p(\theta_{j} | \theta_{i \in [J]\setminus j}, x)$.
Let $\npq_{j}$ denote the natural
parameter of $q^{*}_{j}$, and denote the mean parameter of
$q^{*}_{j}$ as $\mpq_{j} :=  \mbe_{q^{*}_{j}} \theta_j$. We see from
\eq{exp_approx_marg} that
$$
\npq_{j} = \sum_{r \in R_j} G_r \prod_{k\in[J]\setminus j}\mpq_{kr_k}.
$$
Since $\mpq_{j}$ is a function of $\npq_{j}$,
we have the fixed point equations
$\mpq_{j} := M_{j}(\mpq_{i \in [J]\setminus j})$ for mappings $M_{j}$ across $j$ and
$$
\mpq := M(\mpq)
$$
for the vector of mappings $M$.

\section{Linear response covariance estimation} \label{sec:lr}

Let $\vbcov$ denote the covariance matrix of $\theta$ under the
factorized variational distribution $q^{*}(\theta)$, and let $\truecov$
denote the covariance matrix of $\theta$ under the true distribution,
$p(\theta | x)$:
$$
\vbcov := \cov_{q^{*}} \theta,
\quad \quad
\truecov := \cov_{p} \theta.
$$
$\vbcov$ may be a poor
estimate of $\truecov$, even when $\mpq \approx \mbe_{p} \theta$, i.e. when the marginal means match well
 \cite{mackay:2003:information,wang:2005:inadequacy, bishop:2006:pattern, rue:2009:approximate, turner:2011:two}.
Our goal is to use the MFVB solution and the techniques of
linear response theory
\cite{kappen:1998:efficient,opper:2003:variational,welling:2004:linear}
to construct an improved estimate for $\truecov$.

Define $p_{t}(\theta|x)$ such that its log is a linear
perturbation of the log posterior:
\begin{equation} \label{eq:perturbed_dens}
  \log p_{t}(\theta | x) = \log p(\theta | x) + t^{T} \theta - \constant(t),
\end{equation}
where $\constant(t)$ is a constant in $\theta$.  If we assume that $p(\theta | x)$
is a probability distribution with natural parameters in the interior of the
feasible space, then $p_{t}(\theta | x)$ is a probability distribution for
any $t$ in an open ball around $0$.  Since $\constant(t)$ normalizes
the $p_{t}(\theta | x)$ distribution, it is in fact the cumulant
generating function of $p(\theta | x)$. Further, every
(perturbed) conditional distribution 
$p_{t}\left(\theta_{j} | \theta_{i \in [J]\setminus j}, x\right)$
is in the same exponential family as every (unperturbed) conditional
distribution $p\left(\theta_{j} | \theta_{i \in [J]\setminus j}, x\right)$ 
by construction. So, for each $t$, we have mean field 
variational approximation $q_{t}^{*}$ with
marginal means $\mpq_{t,j} := E_{q_{t}^{*}} \theta_j$ and
fixed point equations
$\mpq_{t,j} = M_{t,j}(\mpq_{t,i \in [J]\setminus j})$ across $j$.
Thus, $\mpq_t = M_t(\mpq_t)$
as in \mysec{mfvb}.
Taking derivatives of the latter relationship with respect to $t$, we find
\begin{equation} \label{eq:mean_derivs}
  \frac{d \mpq_t}{d t^T}
    = \frac{\partial M_t}{\partial \mpq_t^T} \frac{d \mpq_t}{d t^T}
      + \frac{\partial M_t}{\partial t^T}.
\end{equation}
In particular, note that $t$ is a vector of size $D$ (the total dimension
of $\theta$), and $\frac{d \mpq_t}{d t^T}$, e.g., 
is a matrix of size $D \times D$ with $(a,b)$th entry equal to the scalar $d\mpq_{t,a} / dt_{b}$.

Since $q_t^{*}$ is the MFVB approximation for the perturbed
posterior $p_t(\theta|x)$,
we may hope that $\mpq_t = E_{q_{t}^{*}} \theta$ is close to the 
perturbed-posterior mean $\mbe_{p_t} \theta$.  The practical success of
MFVB relies on the fact that this approximation is often good in practice.
To derive interpretations of the individual terms in \eq{mean_derivs},
we assume that this equality of means holds,
but we indicate where we use this assumption with
an approximation sign: $\mpq_t \approx \mbe_{p_t} \theta$.
The full derivations of the following equations are given in \app{lr_derivs}.
\begin{align}
  \label{eq:main_dm_dt}
  \frac{d \mpq_t}{d t^T}
    &\approx \frac{d}{dt^T} \mbe_{p_t} \theta = \truecov_{t} \\
  \label{eq:main_dM_dt}
  \frac{\partial M_t}{\partial t^T}
    &= \frac{\partial}{\partial t^T} \mbe_{q_t^*} \theta = \vbcov_{t} \\
  \label{eq:main_dM_dm}
  \frac{d M_t}{d \mpq_t^T}
    &= \vbcov_{t} H_t,
\end{align}
where $\truecov_{t}$ is the covariance matrix of $\theta$ under $p_{t}$,
$\vbcov_{t}$ is the covariance matrix of $\theta$ under $q_t^*$, and 

\begin{equation*}
 H_t := E_{q^{*}} \left(\frac{ \partial^2  \log p_{t}(\theta | x)}
                       { \partial \theta \partial \theta^T}\right)
\end{equation*}

Then substituting \eqs{main_dm_dt}, \eqss{main_dM_dt}, and \eqss{main_dM_dm} into \eq{mean_derivs},
evaluating at $t=0$, and writing $H$ for $H_{0}$ and $V$ for $V_{0}$, we find
\begin{eqnarray}
  \hat{\truecov} &:=& \left. \frac{d \mpq_t}{d t^T} \right|_{t=0} \approx \truecov \nonumber \\
  \hat{\truecov} &=& \vbcov H \hat{\truecov} + \vbcov \Rightarrow \nonumber \\
  \hat{\truecov} &=& (I - \vbcov H)^{-1} \vbcov \label{eq:lrvb_est}
\end{eqnarray}
Thus, we call $\hat{\truecov}$ the LRVB estimate of the true posterior covariance $\truecov$.

\subsection{Exactness of multivariate normal and SEM} \label{sec:mvn}

Consider approximating a multivariate normal posterior distribution $p(\theta|x)$
with MFVB.
This case arises, for instance, given a multivariate normal likelihood
with fixed covariance $S$ and an improper uniform prior on the mean parameter $\mu$:
$$
  p(x | \mu) = \prod_{n=1:N} \gauss(x_n | \mu, S) \; \textrm{ and } \;
  q^{*}(\mu)=\prod_{j=1:J} q^{*}_j(\mu_j)
$$
Here, $\gauss$ represents the multivariate normal
distribution, and the total dimension $D$ of $\mu$ is equal to the number of components $J$.
So $\mu$ is a $J$-length vector for $J > 1$ with elements
$\mu_1, ..., \mu_J$, and $S$ is a known $J \times J$ positive definite matrix.
Our variational approximation, $q^{*}$, is given by the factorized distribution
over mean components.

In this case, it is well known that the MFVB posterior means are correct, but the
marginal variances are underestimated if $S$ is not diagonal.
This fact is often used to illustrate the shortcomings of MFVB
\cite{mackay:2003:information,wang:2005:inadequacy,bishop:2006:pattern,turner:2011:two}.

However, since the posterior means are correctly estimated,
the LRVB approximation in \eq{lrvb_est} is in fact an equality.
That is, for the posterior
location of a multivariate normal with known covariance,
\eq{lrvb_est} is not an approximation, and
$\hat\Sigma = \frac{d \mpq_t}{d t^T} = \truecov$ exactly.
A detailed proof of this fact can be found in \app{SEM}.

This result draws a connection between LRVB
and the ``supplemented expectation-maximization'' (SEM)
method of \cite{meng:1991:using}.  SEM is an asymptotically
exact covariance correction for the EM algorithm that transforms
the full-data Fisher information matrix into the observed-data Fisher
information matrix using a correction that is formally similar to
\eq{lrvb_est}.  In this sense, SEM is a
frequentist perspective on a special case of
the LRVB correction when the amount of data goes to infinity.
More details can be found in \app{SEM}.

\section{Scaling} \label{sec:scaling_formulas}


\eq{lrvb_est} requires the inverse of a matrix as large
as all the unknown natural parameters in the posterior $p(\theta | x)$, which
generally includes both main parameters and nuisance parameters.  In
many applications,
the number of nuisance parameters may be very large. For example,
in the finite mixture of normals
model below (\mysec{experiments}), there is an indicator variable for
the cluster assignment for each data point. If we treat these
variables as nuisance parameters,
the number of nuisance parameters grows with the number of data points $N$.
As a result, directly computing the matrix inverse in \eq{lrvb_est}
may be impractical.

However, since the variational covariance
$\vbcov$ is block diagonal and $H$ is often sparse, one may
be able to use Schur complements to efficiently calculate sub-matrices of
$\hat\Sigma$.
Suppose that our full parameter space, $\theta$, can be divided into a
small number of variables of primary interest, called $\alpha$,
and a large (and possibly growing) number of nuisance variables, $z$:
$$
\theta  = \left(\begin{array}{c}
\alpha\\
z
\end{array}\right),
\quad \quad
\truecov  =
\left[\begin{array}{cc}
\truecov_{\alpha} & \truecov_{\alpha z}\\
\truecov_{z\alpha} & \truecov_{z}
\end{array}\right].
$$
We can similarly define partitions for $H$ and $V$.  Assume also
that we have the usual mean field factorization of the variational
approximation: $q^*(\alpha,z) = q^*(\alpha)q^*(z)$, so that
$\vbcov_{\alpha z} = 0$.  (The variational
distributions may factor further as well.) 
We calculate the Schur complement of $\hat{\Sigma}$ in \eq{lrvb_est}
with respect to its $z$th 
component to find that 
\begin{align}
  \label{eq:nuisance_lrvb_est}
&\hat{\truecov}_{\alpha} = \\
&\quad  ( I_{\alpha} - V_{\alpha}H_{\alpha} -
  V_{\alpha}H_{\alpha z} \left(I_{z} - V_{z}H_{z})^{-1}
  V_{z}H_{z\alpha}\right)^{-1} V_{\alpha}
\nonumber
\end{align}
Here, $I_\alpha$ and $I_z$ refer to $\alpha$- and $z$-sized identity
matrices, respectively.  A detailed derivation can be found in
\app{lr_derivs}.  In cases where
$\left(I_{z} - V_{z}H_{z}\right)^{-1}$
can be efficiently calculated, \eq{nuisance_lrvb_est}
involves only an $\alpha$-sized inverse.
A finite mixture of Gaussians model, which we describe
in \mysec{experiments}, is one such case.

\section{Influence scores} \label{sec:influence}

Influence scores are a powerful tool from classical linear regression
that describe how much influence a particular data point has on
a modeled outcome.  They can be used, for example, to 
identify outliers and investigate the robustness of the linear model
\cite{chatterjee:1986:leverage,cook:1986:assessment}.
Analogously, it can be useful know how much Bayesian posterior means depend
on the values of individual data points. A number
of authors have proposed methods to measure the sensitivity of the
posterior distribution to perturbations or deletions of data points
both in linear models \cite{guttman:1992:bayesian,pena:1993:comparing}
and more generally
\cite{carlin:1991:expected,peng:1995:bayesian,zhu:2011:bayesian}.
LRVB gives a convenient formula to analytically
calculate the influence of individual data points as covariances between the $\theta$
vector and infinitesimal noise added to the data.

Consider the conditional expectation of a single parameter value,
$\theta_{i}$, as a function of a single data point, $x_{n}$. Specifically,
for notational convenience, define
\begin{eqnarray*}
m_{\theta_{i}}\left(x_{n}\right) & = &
  \mbe_{p}\left[\theta_{i}\vert x_{1},...,x_{n},...,x_{N}\right]
\end{eqnarray*}
One measure of the sensitivity of $\theta_{i}$ to $x_{n}$ is the
derivative of this function, 
$\frac{d}{dx_{n}}m_{\theta_{i}}\left(x_{n}\right)=m_{\theta_{i}}'\left(x_{n}\right)$.
We will refer to this derivative as an \emph{influence score}
for Bayesian models.

To draw a connection between this influence score and covariances,
imagine that our observations, $x$, are in fact slightly noisy
versions of the true data, $x^{*}$.  Specifically, our model
becomes
$$
  p(x | x^*, \theta) = p(x | x^*) p(x^* | \theta).
$$
In this new model, $x^{*}$ are unknown parameters, like $\theta$.
We assume our posterior beliefs about the true $x^{*}$ obey the following
assumptions\footnote{
Note that if each observation $x^*_n$ has only one sufficient statistic, the
perturbations can be treated as independent, and $S_{x}$ will be the identity.
However, if each observation $x^*_n$ has a vector of sufficient statistics,
$S_{x}$ must take that structure into account.  For example, if $x_n$
is drawn from a normal distribution centered at $x^*_n$,
it will have sufficient statistics
$x_n$ and $x_n^2$, which will be correlated with one another.  These correlations
will cause $S_{x}$ to be different from the identity in general.}:
\begin{eqnarray}
\label{eq:perturbation_assumptions}
\mbe\left(x^{*} \vert x\right) & = & x \nonumber\\
\cov\left(x^{*} \vert x\right) & = & \Sigma_{x} \\
S_{x} & := & \lim_{\epsilon \rightarrow 0} \frac{1} {\epsilon}\Sigma_{x} \nonumber\\
0 \ne S_{x} &<& \infty \nonumber\\
\textrm{Higher moments} & = &
    O\left(\epsilon^{p}\right)\textrm{, for }p>2 \nonumber
\end{eqnarray}

That is, the covariance matrix $\Sigma_{x}$ is
proportional to $\epsilon$.
Conditional on $x$, $m_{\theta_{i}}\left(x_{i}^{*}\right)$ is
a random variable that varies as the posterior belief about
$x_{i}^{*}$ varies around $x_{i}$.
By forming a Taylor expansion of $m_{\theta_{i}}'(x^*_{n})$
around $x_{n}$ we show 
for any data point $x_n$ and any parameter $\theta_i$ that:
\begin{align}\label{eq:influence_as_derivative}
m_{\theta_{i}}'\left(x_{n}\right) & =
  \lim_{\epsilon \rightarrow 0}\frac{1}{\epsilon}
   \cov\left(\theta_{i}, x_{n}^{*}\vert x\right)
\end{align}

(\app{influence_details} contains further details.)
That is, the limiting value of this covariance as $\epsilon \rightarrow 0$
can be used to estimate the influence
of observation $x_n$ on the mixture parameters in the spirit of classical
linear model influence scores from the statistics
literature.

Note that the covariances on the right hand side of \eq{influence_as_derivative}
are impossible to compute in naive MFVB,
since they involve correlations between distinct mean field components, and
difficult to compute using MCMC, since they require estimating a large number
of very small covariances with a finite number of draws.  However,
LRVB leads to a straightforward analytic expression for these
covariances.

To derive the LRVB influence scores,
we now assume that the parameter space of the posterior can be divided into
three types of
variables.  We have main and nuisance parameters,
called $\alpha$ and $z$ respectively as in \mysec{scaling_formulas},
and now also $x^{*}$, the unobserved data.
As before, we also assume that each has its own variational distribution,
i.e. $q^{*}(\theta) = q^{*}(\alpha)q^{*}(z)q^{*}(x^{*})$.  (The variational
distributions may factor still further.) We can write:
\begin{equation*}
\theta =  \left(\begin{array}{c}
\alpha\\
x^*\\
z
\end{array}\right) \quad \textrm{ and } \quad
\truecov = \left[\begin{array}{ccc}
\truecov_{\alpha} & \truecov_{\alpha x^*} & \truecov_{\alpha z}\\
\truecov_{x^*\alpha} & \truecov_{x^*} & \truecov_{x^*z}\\
\truecov_{z\alpha} & \truecov_{zx^*} & \truecov_{z}
\end{array}\right].
\end{equation*}

We use a similar partition for $V$ and $H$.  Recall that $\Sigma_x$ is
the result of an infinitesimal perturbation and nearly zero, so we express
our results in terms of $S_x$ in \eq{perturbation_assumptions}.
Let $\Sigma_\alpha$
denote the the ordinary LRVB covariance of $\alpha$ from \eq{nuisance_lrvb_est}.
The covariance between $\alpha$ and the infinitesimally perturbed $x$
then has the following formula:
\begin{align}
\label{eq:influence_score}
&\lim_{\epsilon \rightarrow 0} \frac{1}{\epsilon}\Sigma_{\alpha x^*} = \\
&\quad \Sigma_{\alpha}^{-1} ( V_{\alpha}H_{\alpha x^*} +
   V_{\alpha}H_{\alpha z}\left(I_{z}-V_{z}H_{z}\right)^{-1}V_{z}H_{zx^*})S_{x}
   \nonumber
\end{align}
This formula follows from the Schur inverse and
taking $\epsilon \rightarrow 0$.  Details can be found in
\app{influence_details}.

\section{Experiments} \label{sec:experiments}

Mixture models constitute some of the most popular models for MFVB
application \cite{blei:2003:lda, blei:2006:dp}
and are often used as an example
of where MFVB covariance estimates may go awry \cite{bishop:2006:pattern, turner:2011:two}.
We thus illustrate the efficacy of LRVB on the problem of approximating
the posterior when the likelihood is a finite mixture of multivariate Gaussians.

\subsection{Model} \label{sec:normal_mix_model}

We consider a $K$-component mixture of $P$-dimensional
multivariate normals with unknown component means, covariances,
and weights.
In what follows, the weight $\pi_k$ is the probability of the $k$th component,
$\gauss$ denotes the multivariate normal distribution,
$\mu_k$ is the $P$-dimensional mean of the $k$th component,
and $\Lambda_k$ is the $P \times P$
precision matrix of the $k$th component (so
$\Sigma_k := \Lambda_k^{-1}$ is the
covariance).  $N$ is the number of data points, and $x_{n}$
is the $n$th observed $P$-dimensional data point.
We employ the standard trick of augmenting the data generating
process with the latent indicator variables $z_{nk}$,
where $n=1,...,N$ and $k=1,...,K$, and
\begin{eqnarray*}
P(z_{nk} = 1) & = & \pi_k\\
z_{nk} = 1 & \Rightarrow & x_{n} \sim \gauss(\mu_k, \Lambda^{-1}_k)
\end{eqnarray*}
The full likelihood under this augmentation is
\begin{equation} \label{eq:normal_mixture_model}
  p(x | \pi, \mu, \Lambda, z) =
    \prod_{n=1:N} \prod_{k=1:K} \gauss(x_n | \mu_k, \Lambda^{-1}_k)^{z_{nk}}
\end{equation}
We assign independent variational factors to $\mu$, $\pi$, $\Lambda$, and
$z$.\footnote{
Unlike \mysec{mvn}, the variational posteriors for $\mu$
factor across components but not within components.  That is, for each
$k$, $q^{*}(\mu_k)$ is a multivariate (not a univariate) normal distribution.}
The $z$ variables are nuisance parameters.

Our goal is to estimate the covariance matrix of the
parameters $\log(\pi), \mu, \Lambda$ in the posterior
distribution $p(\pi, \mu, \Lambda | x)$ and to
estimate the influence of each data point $x_{n}$ on the posterior
means of $\log(\pi), \mu, \Lambda$ using 
LRVB (see \mysecs{lr} and \mysecss{influence}).

In addition to the standard MFVB covariance matrices, we
will compare the accuracy and speed of our estimates to
Gibbs sampling on the augmented model (\eq{normal_mixture_model})
using the function \texttt{rnmixGibbs}
from the R package \texttt{bayesm}.  Our LRVB implementation
relied heavily on linear algebra routines in
\texttt{RcppEigen}~\cite{rpackage:RcppEigen}.
We evaluate our results both on simulated data and on 
the MNIST data set~\cite{mnist:lecun1998gradient}.

\subsection{MNIST data set} \label{sec:mnist}

\newcommand{\MNISTDelta}{0.01}
\newcommand{\MNISTn}{12665}
\newcommand{\MNISTPerturbedN}{5}
\newcommand{\MNISTTestN}{2115}
\newcommand{\MNISTp}{25}
\newcommand{\MNISTTestAccuracy}{0.92}
\newcommand{\MNISTTestError}{0.08}
\newcommand{\MNISTInfluenceTime}{37}
\newcommand{\MNISTPerturbationTime}{20.7}

For a real-world example,
we applied LRVB to the unsupervised classification of two digits 
from the MNIST dataset of handwritten digits.
We first preprocess the MNIST dataset by performing principle component
analysis on the training data's centered pixel intensities 
and keeping the top $\MNISTp$ components.
For evaluation, the test data is projected onto the same
$\MNISTp$-dimensional subspace found using the training data.

We then treat the problem of
separating handwritten $0$s from $1$s as an unsupervised clustering
problem.  We limit the dataset to instances labeled as $0$
or $1$, resulting in $\MNISTn$ training and $\MNISTTestN$ test points.
We fit the training data
as a mixture of multivariate Gaussians.  Here, $K=2$, $P=\MNISTp$, and
$N=\MNISTn$.  Then, keeping the $\mu$, $\Lambda$, and $\pi$
parameters fixed, we calculate the expectations of the
latent variables $z$ in \eq{normal_mixture_model} for the test set.
We assign test set data point $x_n$ to whichever component has
maximum a posteriori expectation.  We count successful classifications
as test set points that match their cluster's majority label
and errors as test set points that are different from their cluster's
majority label.  By this measure, our test set error rate was
$\MNISTTestError$. We stress that we intend only to demonstrate
the feasibility of LRVB on a large, real-world dataset rather than
to propose practical methods for modeling MNIST.

\subsection{Covariance experiments} \label{sec:covariances}

\newcommand{\effsizecutoff}{500}
\newcommand{\simulationsize}{68}
\newcommand{\simulationn}{10000}
\newcommand{\simulationp}{2}
\newcommand{\simulationk}{2}
\newcommand{\simulationvbtime}{3.40}
\newcommand{\simulationmaptime}{16.98}
\newcommand{\simulationgibbstime}{306.97}

In this section, we check the covariances estimated with
\eq{lrvb_est} against a Gibbs sampler, which we treat as the ground
truth.\footnote{
The likelihood described in \mysec{normal_mix_model} is symmetric under
relabeling.  When the component locations and shapes have
a real-life interpretation, the researcher is generally
interested in the uncertainty of $\mu$, $\Lambda$, and $\pi$ for a
particular labeling, not the
marginal uncertainty over all possible re-labelings.  This poses
a problem for standard MCMC methods, and we restrict our simulations
to regimes where label switching did not occur in our Gibbs sampler.
The MFVB solution conveniently avoids this problem since the mean field
assumption prevents it from representing more than one mode of the
joint posterior.}

For simulations, we generated $N=\simulationn$ data points from
$K=\simulationk$ multivariate normal components in
$P=\simulationp$ dimensions.  MFVB is expected
to underestimate the marginal variance of $\mu$, $\Lambda$, and $\log(\pi)$
when the components overlap since that induces correlation in the
posteriors due to the uncertain classification of points between the
clusters.  These correlations are in violation of the MFVB assumption
and cause the MFVB posterior variances to be mis-estimated.

We performed $\simulationsize$ simulations, each of which had
at least $\effsizecutoff$ effective Gibbs
samples in each variable---calculated with the R tool \texttt{effectiveSize}
from the \texttt{coda} package \cite{rpackage:coda}.
We note that for each of the parameters $\log(\pi)$, $\mu$, and $\Lambda$,
both MH and MFVB produce posterior means close to the
ground truth MCMC values, so our key assumption in the
LRVB derivations of \mysec{lr} appears to hold.

Each point in \fig{SimulationStandardDeviations} represents the
a single parameter in a single simulation.
For example, each point on the $\Lambda$ graph represents the 
marginal standard deviation of a particular component of the $\Lambda$
matrix for both the Gibbs sample and an alternative method.
The first three graphs show the diagonal standard deviations,
and the final graph shows the off-diagonal covariances.  Note
that the final graph excludes the MFVB estimates since most
of the values are zero.

\fig{SimulationStandardDeviations} shows that the 
raw MFVB covariance estimates are often quite different from the
Gibbs sampler results, while the LRVB estimates match
the Gibbs sampler closely.  Although not shown, the results
on the MNIST dataset were as good.

In these simulations, on average LRVB took only $\simulationvbtime$ seconds,
whereas the Gibbs sampler took $\simulationgibbstime$ seconds.
We explore these timing tradeoffs in more detail in \mysec{scaling}.


\begin{knitrout}
\definecolor{shadecolor}{rgb}{0.969, 0.969, 0.969}\color{fgcolor}\begin{figure}[ht!]

{\centering \includegraphics[width=0.49\linewidth,height=0.49\linewidth]{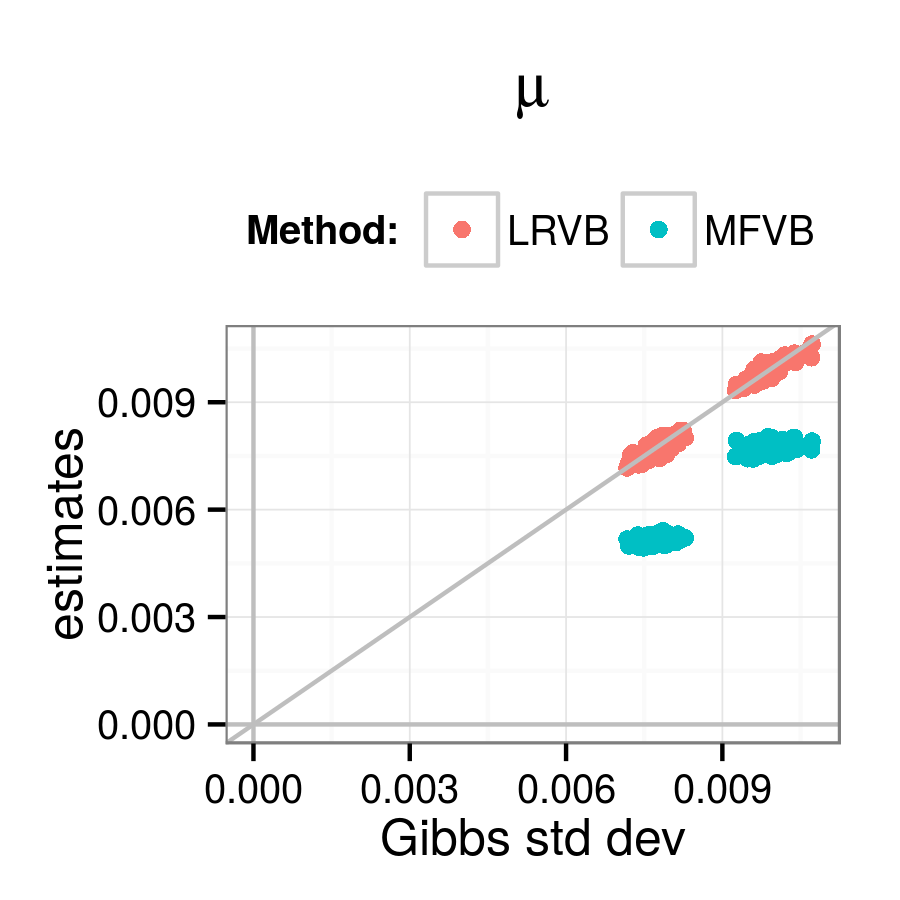} 
\includegraphics[width=0.49\linewidth,height=0.49\linewidth]{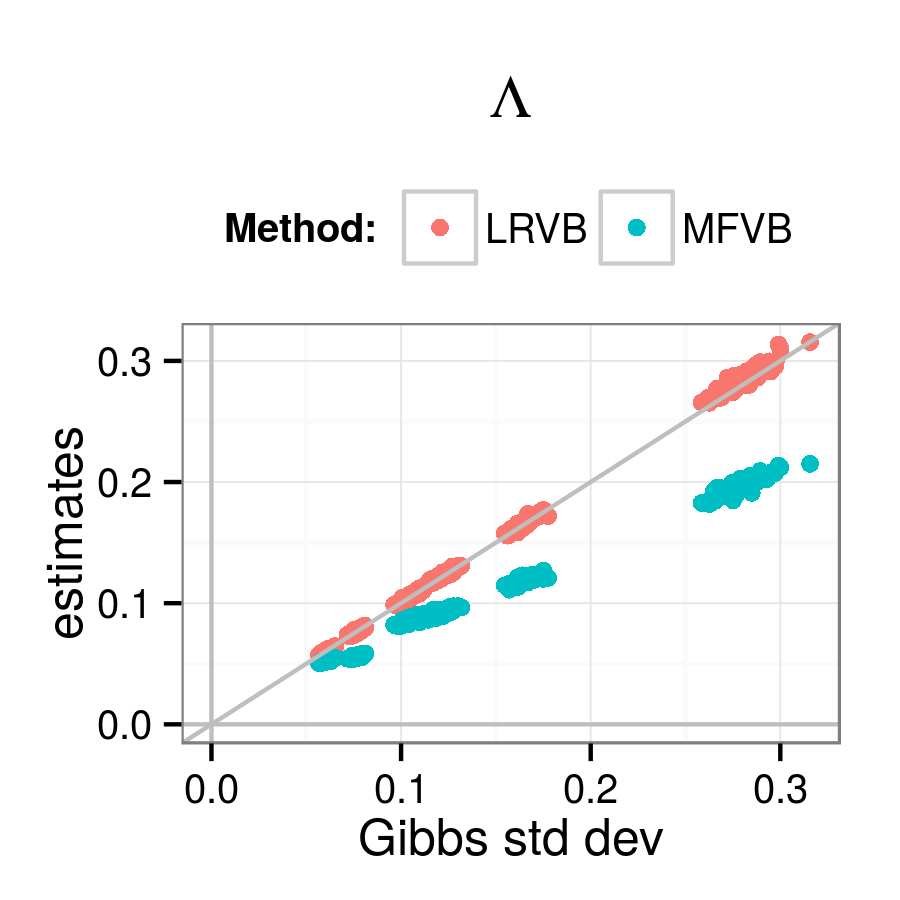} 
\includegraphics[width=0.49\linewidth,height=0.49\linewidth]{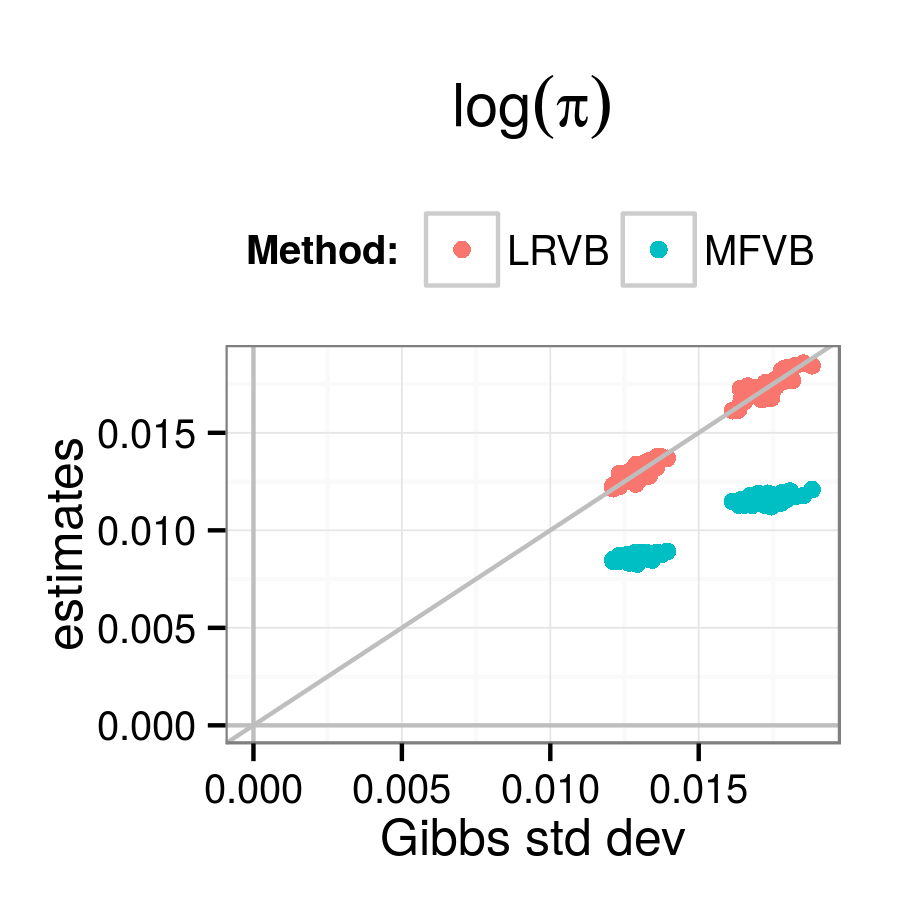} 
\includegraphics[width=0.49\linewidth,height=0.49\linewidth]{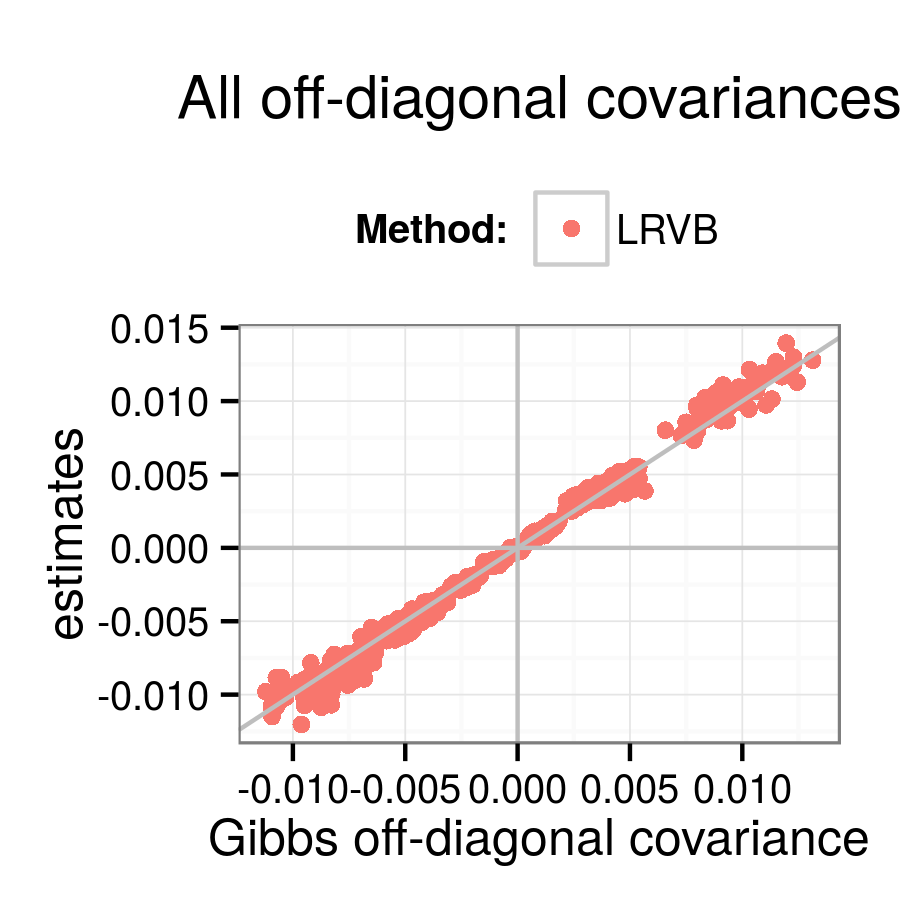} 

}

\caption[Comparison of estimates of the posterior covariance matrix on simulation data for each model parameter from Gibbs, MFVB, and LRVB methods]{Comparison of estimates of the posterior covariance matrix on simulation data for each model parameter from Gibbs, MFVB, and LRVB methods.  In the simulations, $N=\simulationn$ (data points), $K=\simulationk$ (components) and $P=\simulationp$ (dimensions).\label{fig:SimulationStandardDeviations}}
\end{figure}

\end{knitrout}

%

\subsection{Scaling experiments} \label{sec:scaling}

In this section we show that, for the finite mixture of
multivariate Gaussians model, \eq{lrvb_est} scales linearly with $N$
and polynomially in $K$ and $P$.  We also
use simulations to experimentally compare the scaling of
LRVB running times to Gibbs sampling estimates.
We show that LRVB is much faster than Gibbs for the
range of parameters we simulated, though Gibbs may be
preferable for very high-dimensional problems.

In the terms of \mysec{scaling_formulas}, $\alpha$ 
includes the sufficient statistics from $\mu$, $\pi$, and $\Lambda$,
and grows as $O(KP^2)$.
The sufficient statistics for the variational posterior of
$\mu$ contain the $P$-length vectors $\mu_k$, for each
$k$, and the $(P + 1) P / 2$ second-order products
in the covariance matrix $\mu_k \mu_k^T$.  Similarly, for each $k$,
the variational posterior of $\Lambda$ involves the
$(P + 1) P / 2$ sufficient statistics in the symmetric matrix
$\Lambda_k$ as well as the term $\log |\Lambda_k|$.  The
sufficient statistics for the posterior of $\pi_k$ are the $K$
terms $\log \pi_k$.\footnote{
Since $\sum_{k=1:K}\pi_k = 1$, using $K$
sufficient statistics involves one redundant parameter.
However, this does not violate any of the necessary assumptions
for \eq{lrvb_est}, and it considerably simplifies the calculations.
Note that though the perturbation argument of \mysec{lr}
requires the natural parameters of
$p(\theta | x)$ to be in the interior of the feasible space,
it does not require that the natural parameters of $p(x | \theta)$
be interior.}
This means that, minimally, \eq{lrvb_est}
will require the inverse of a matrix of size $O(KP^2)$.

The sufficient statistics for
$z$ have dimension $K \times N$.  In other words,
the number of nuisance parameters grows with the number of
data points, but $H_{z}=0$ for the multivariate normal
(\app{mvn_details} contains further details),
so we can apply \eq{nuisance_lrvb_est} to replace the
inverse of an $O(KN)$-sized matrix
with multiplication by an $O(KN)$-sized matrix.  Here,
$z$ conveniently corresponds directly to the $z$
in \mysec{scaling_formulas}.

Since a matrix inverse is cubic in the size of the matrix,
the worst-case scaling for LRVB is then $O(K^2)$ in $K$,
$O(P^6)$ in $P$ and $O(N)$ in $N$.

In our simulations, shown in \fig{ScalingGraphs}, we can see that,
in practice, LRVB scales linearly in $N$ and slightly less than
quadratically in $P$, which is much better than the theoretical worst case.
Note that the vertical axis, the time to run the algorithm,
is on the log scale.  At every value of $P$, $K$, and $N$ examined here,
calculating LRVB is much faster than Gibbs sampling.\footnote{
For numeric stability
we started the optimization procedures for MFVB at the true
values, so the time to compute the optimum in our simulations
was very fast and not representative of practice.
On real data, the optimization time will depend on the
quality of the starting point.
Consequently, the times shown for LRVB are only the 
times to compute the LRVB estimate.  The optimization times were
on the same order.
The Gibbs sampling time was linearly rescaled to the amount
of time necessary to achieve 1000 effective samples in the slowest-mixing
component of any parameter.}

\begin{knitrout}
\definecolor{shadecolor}{rgb}{0.969, 0.969, 0.969}\color{fgcolor}\begin{figure}[ht!]

{\centering \includegraphics[width=0.49\linewidth,height=0.49\linewidth]{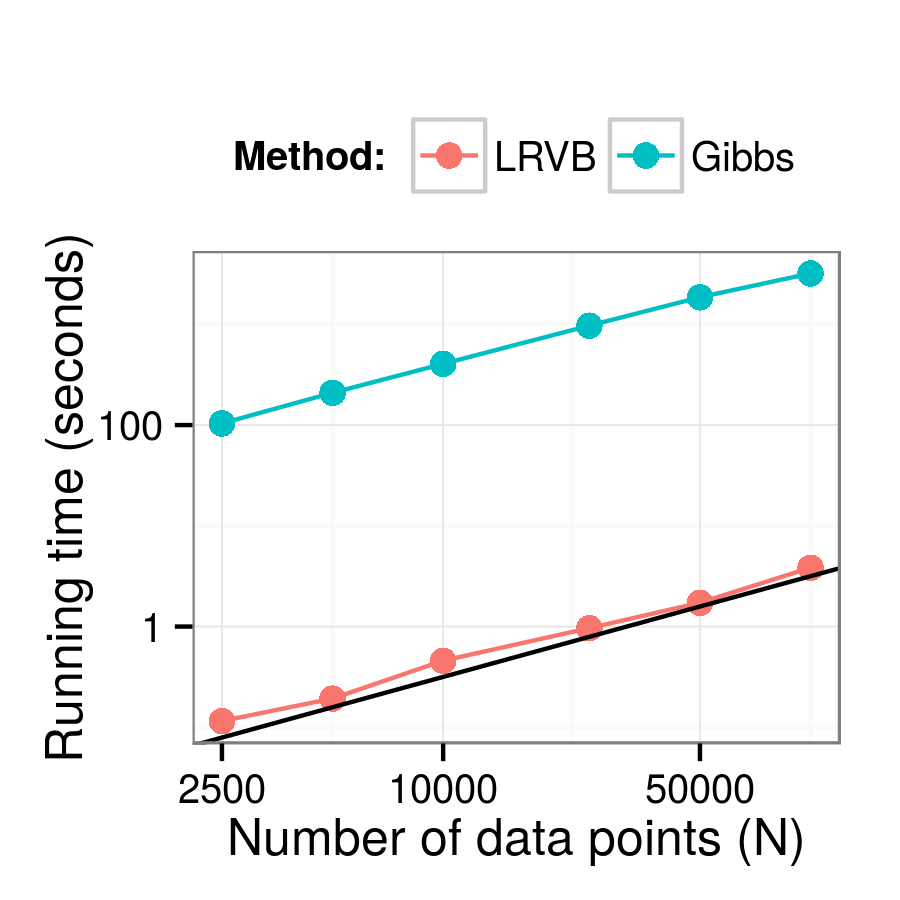} 
\includegraphics[width=0.49\linewidth,height=0.49\linewidth]{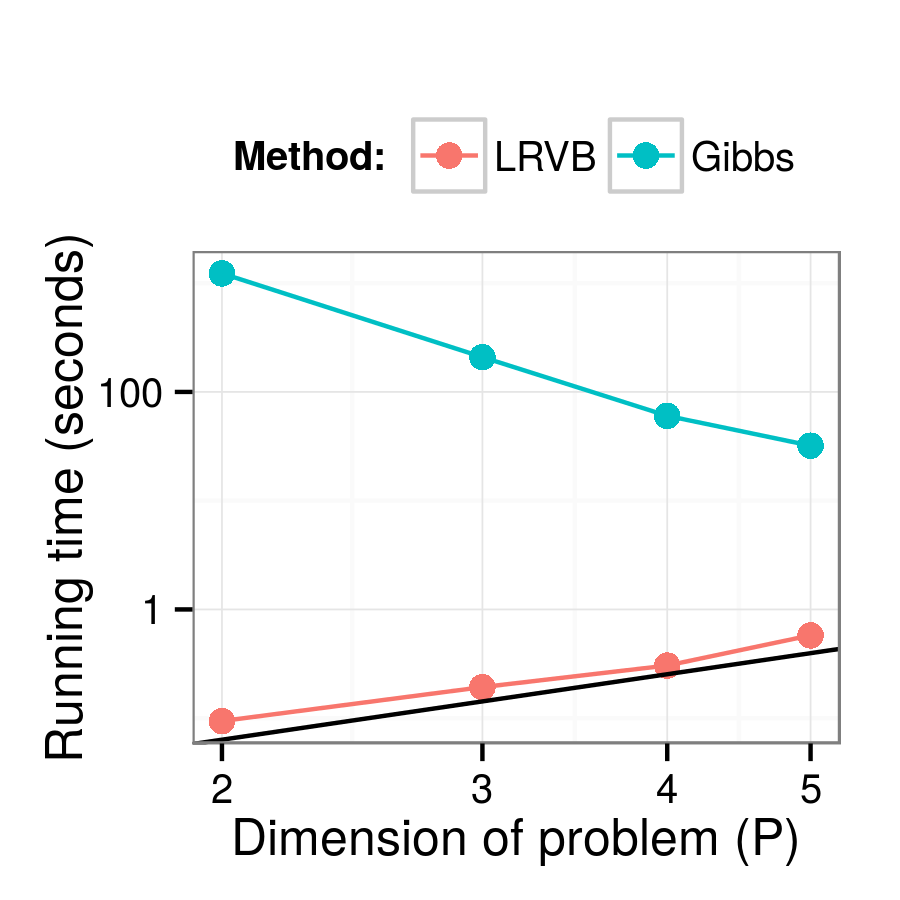} 
\includegraphics[width=0.49\linewidth,height=0.49\linewidth]{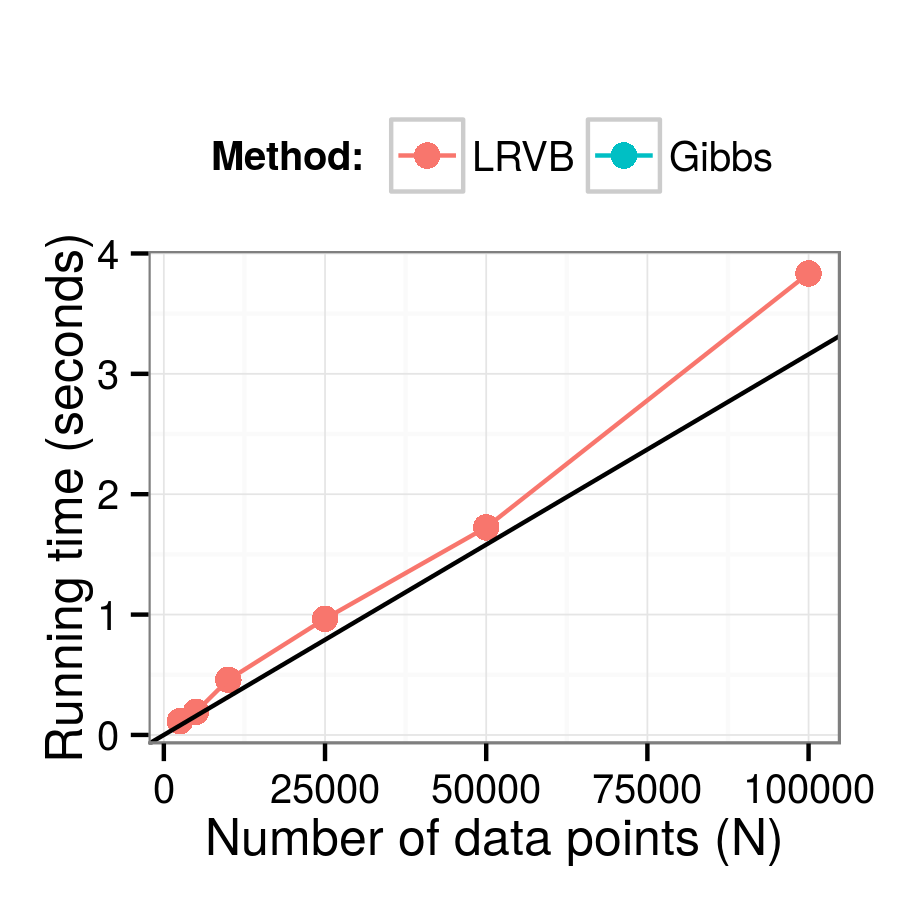} 
\includegraphics[width=0.49\linewidth,height=0.49\linewidth]{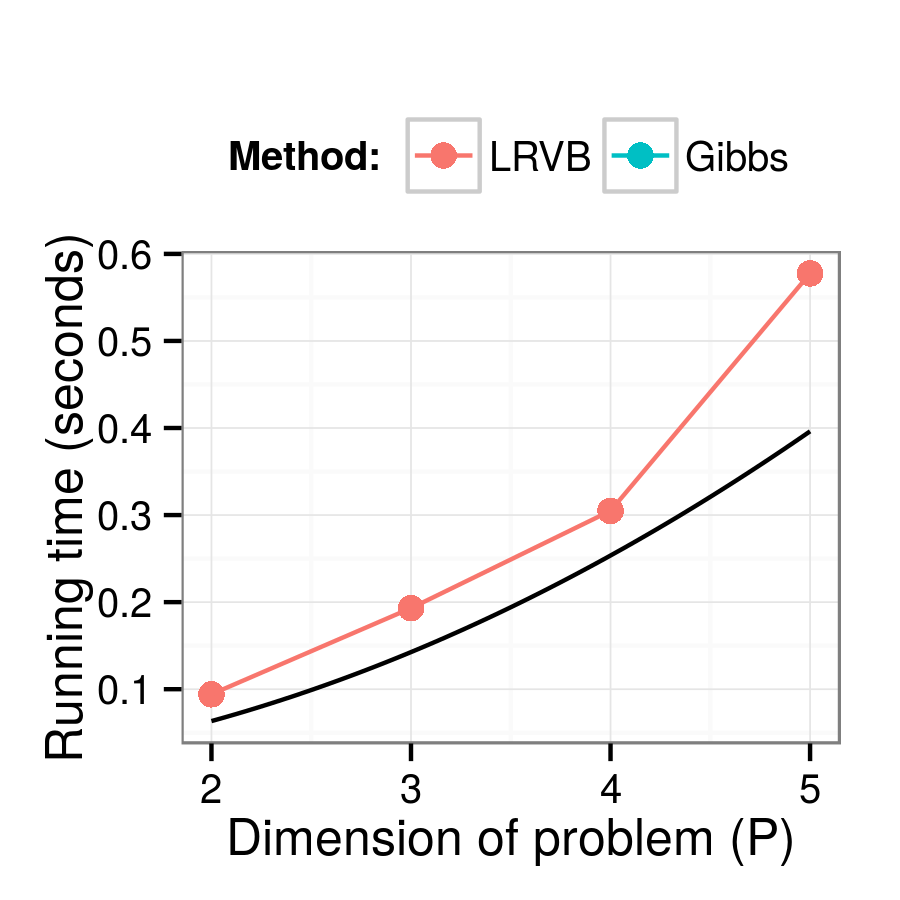} 

}

\caption[Scaling of LRVB and Gibbs on simulation data in both log and linear scales]{Scaling of LRVB and Gibbs on simulation data in both log and linear scales.  Before taking logs, the line in the (N) graph is $y\propto x$, and in the (P) graph, it is $y \propto x^2$.\label{fig:ScalingGraphs}}
\end{figure}

\end{knitrout}

\subsection{Influence score experiments} \label{sec:normal_sensitivity}

We now demonstrate the accuracy of the influence score
formula, \eq{influence_score}, on simulated data
and on the MNIST dataset.  We first compare \eq{influence_score}
to numeric derivatives.  We then look at some patterns in the
data that are made visible by having easy-to-calculate influence scores.

\subsubsection{Comparison with numeric derivatives}

\newcommand{\LowOverlapN}{10000}
\newcommand{\LowOverlapP}{2}
\newcommand{\LowOverlapK}{2}

In order verify that LRVB influence scores are correct for this model,
we manually perturbed each component of the data and
re-fit to find the new MFVB optimum.
In other words, we numerically compute the derivative in
\eq{influence_as_derivative}.

Our simulation used $N=\LowOverlapN$, $P=\LowOverlapP$,
and $K=\LowOverlapK$.  This was small enough to calculate the
influence score for every data point in every dimension.
To select sample data points for MNIST influence scores, we selected
$\MNISTPerturbedN$ representative data points
with different ranges of $\mbe_{q^{*}}z_n$
values so that some had their posterior probability concentrated
in only one component, and some that were uncertainly classified
between the two components.  We then computed the
LRVB influence scores from \eq{influence_score}.  For comparison,
we again manually perturbed each dimension of each data
point and re-optimized.

On MNIST, not counting the time to compute the initial LRVB covariance,
calculating the LRVB influence scores
took $\MNISTInfluenceTime$ seconds,
and the process of perturbing and re-optimizing
took $\MNISTPerturbationTime$ minutes.

The comparison between numeric differentiation and LRVB influence scores
for both a simulation (left) and MNIST (right)
is shown in \fig{InfluenceVsDerivativesGraphs}.
The influence scores obtained from the two
approaches are practically indistinguishable.
Though not shown, in both simulations and on MNIST, for each
$\mu$, $\Lambda$, and $\log(\pi)$
parameter, the two methods were as similar to one another as in \fig{InfluenceVsDerivativesGraphs}.

\subsubsection{Influence score data}

One can see interesting patterns in the data with influence scores.
Consider, for example, the simulated data, which is depicted in \fig{InfluenceLandscape}
and has moderately overlapping components.
The graph shows the effect on $\mu_{11}$ of perturbing the
$x_{n1}$ (horizontal)
coordinate of each datapoint.  One can see that the component's
mean is essentially determined by the points that are assigned to it.
Interestingly, points on the border between the two components reverse the
sign of their effect.  This is caused by changes in $\mbe_{q^{*}} z_n$ that more than
counterbalance their effects on $\mbe_{q^{*}} \mu$.

\begin{knitrout}
\definecolor{shadecolor}{rgb}{0.969, 0.969, 0.969}\color{fgcolor}\begin{figure}[ht!]

{\centering \includegraphics[width=0.49\linewidth,height=0.49\linewidth]{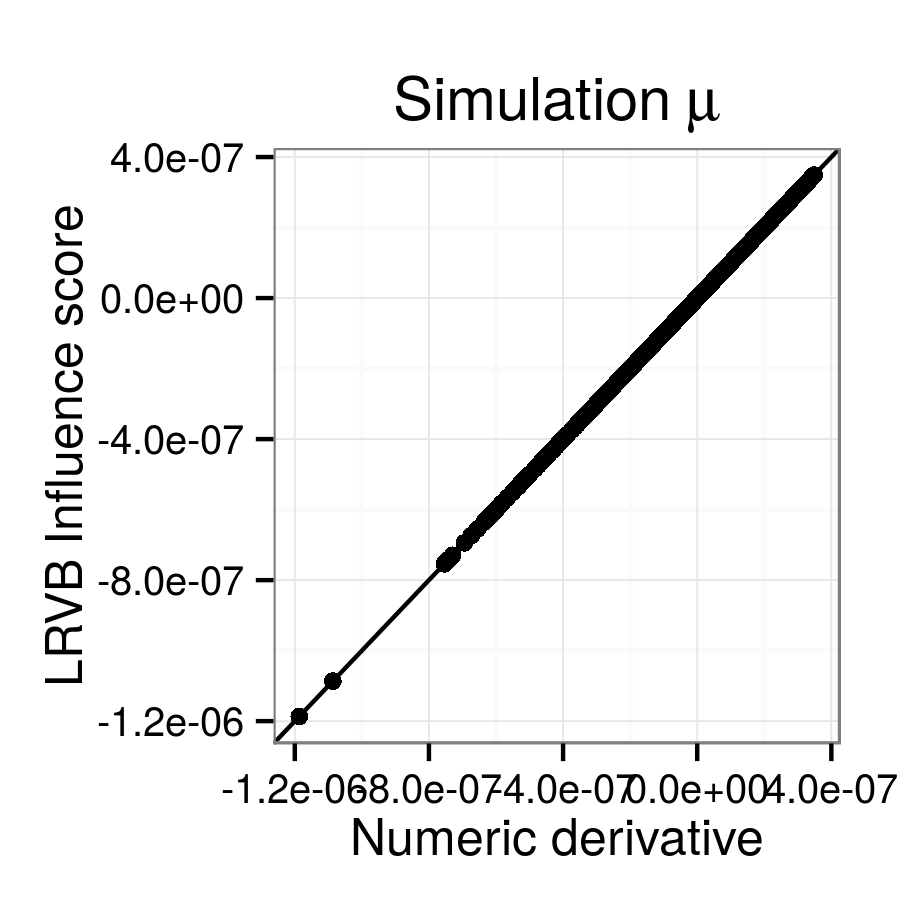} 
\includegraphics[width=0.49\linewidth,height=0.49\linewidth]{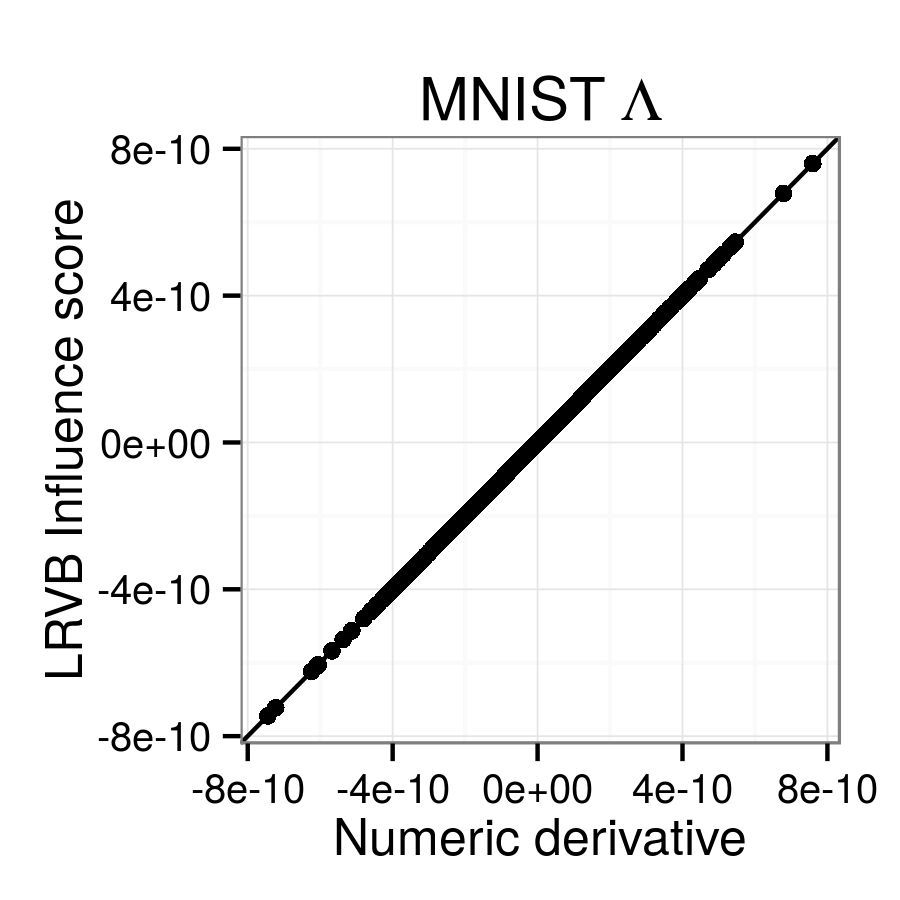} 

}

\caption[LRVB influence score accuracy]{LRVB influence score accuracy.  The simulated data uses all $x_n$, and the MNIST data uses five representative $x_n$.\label{fig:InfluenceVsDerivativesGraphs}}
\end{figure}

\end{knitrout}

As can be seen in the simulated data of \fig{InfluenceLandscape},
the situation becomes more complex when the components overlap even more.
Data points that are distant from a component center
have nearly as much influence as data points well within the component.

\begin{knitrout}
\definecolor{shadecolor}{rgb}{0.969, 0.969, 0.969}\color{fgcolor}\begin{figure}[ht!]

{\centering \includegraphics[width=1\linewidth,height=0.75\linewidth]{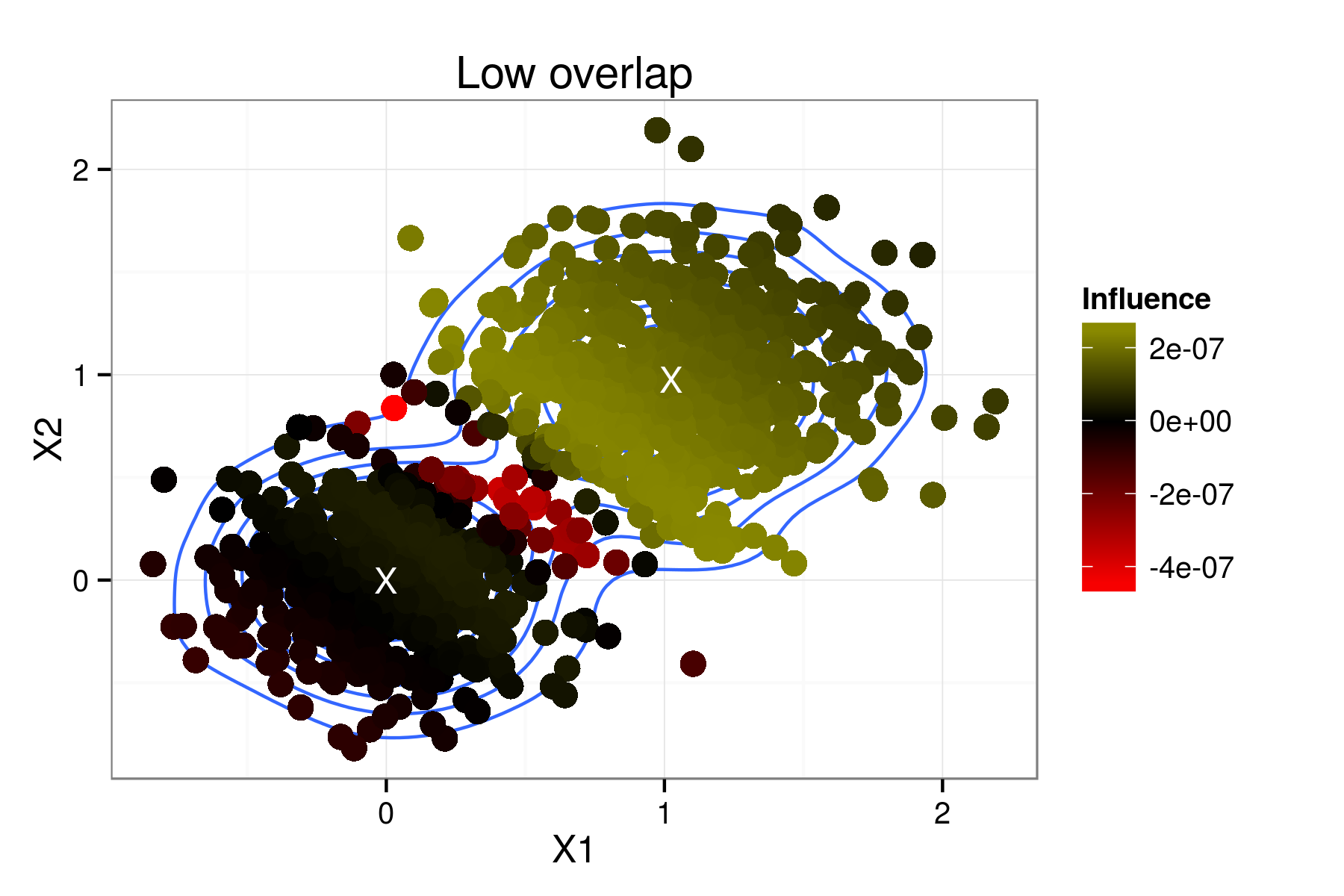} 
\includegraphics[width=1\linewidth,height=0.75\linewidth]{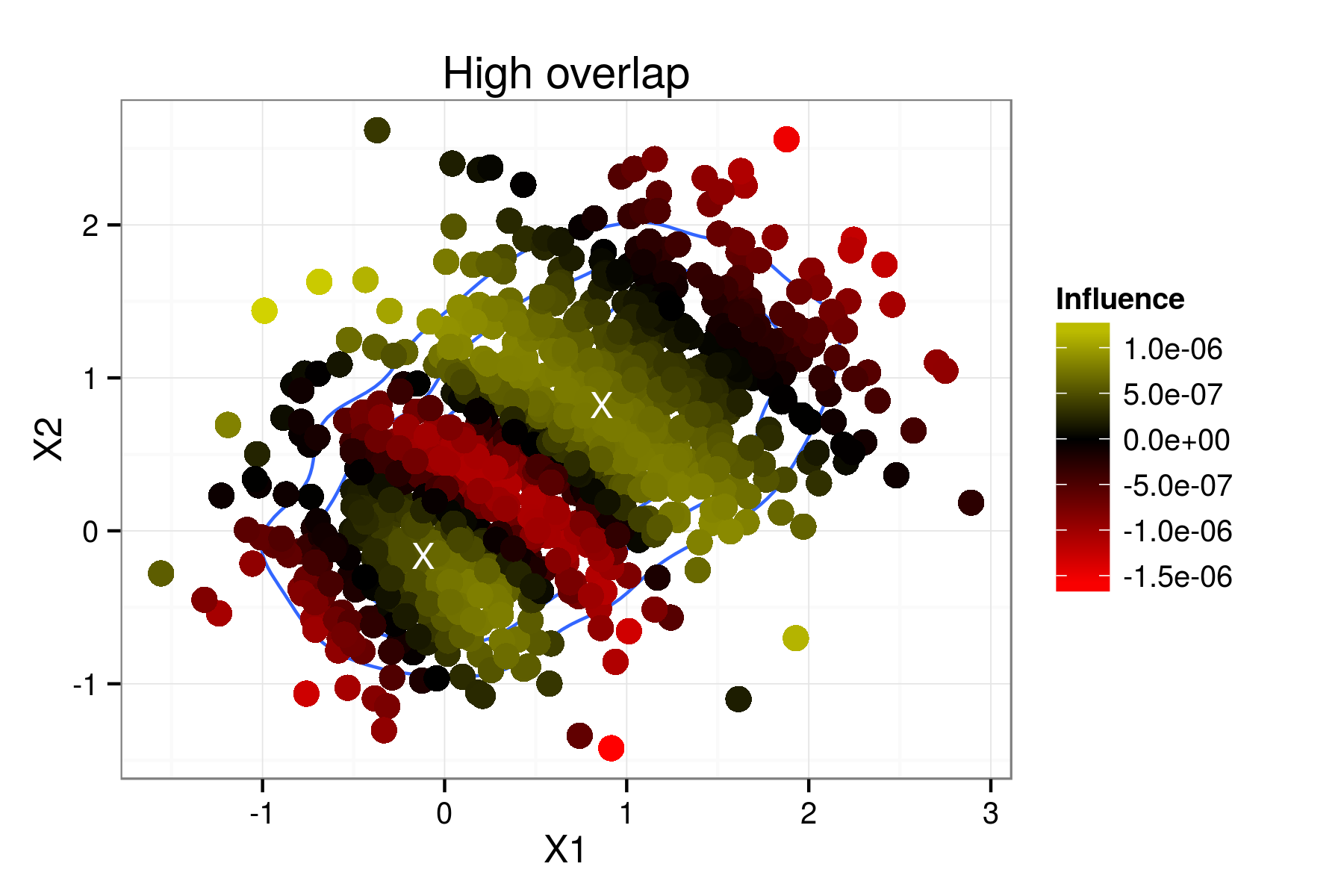} 

}

\caption[Influence scores for components with different amounts of overlap]{Influence scores for components with different amounts of overlap.  Each graph shows the influence of $x_{n1}$ on $\mu_{11}$, which is mean of the upper-right hand component.  (X) indicates a component posterior mean.\label{fig:InfluenceLandscape}}
\end{figure}

\end{knitrout}

Finally, we consider influence scores in the MNIST data set.
We selected $100$ data points with a 0 or 1 label uniformly at random.
In \fig{InfluenceLandscape}, we considered the influence of one particular dimension
of each data point on one particular dimension of each component mean.
Recall
from \mysec{mnist} that each data point and component mean
is $\MNISTp$-dimensional.
Now we wish to derive a single influence score for the 
effect of each data point vector-valued $x_n$ on each vector-valued component mean $\mu_k$.

To that end,
we define the influence of a data point $x_n$ on $\mu_k$ as the
directional derivative of $\|\mu_k\|_2^2$ with respect to $x_n$.
That is, we calculate the vector $\partial \| \mu \|_2^2 / \partial x_n$
using LRVB as described above. Then we compute
$$
\textrm{Influence of }x_n\textrm{ on }\mu_k :=
   \max_{\delta: \|\delta\| = 1}\left(
       \frac{\partial \|\mu_k\|_2^2}{ \partial x_n^T} \delta \right)
$$

The resulting influences
are plotted in
\fig{MNISTInfluenceDistribution}.  Each point corresponds to
a data point $x_n$. Each sub-figure corresponds to a different
component mean parameter. The horizontal axis value for 
point $x_n$ is the logit of
$\mbe_{q^*} z_{nk}$ (capped at $\pm15$), which measures the posterior
probability that $x_n$ came from component $k$.

The two components show
very different patterns.  Component $0$, the
mode with mostly handwritten zeroes, has much higher influence
amongst points that are classified within it than component $1$.

\begin{knitrout}
\definecolor{shadecolor}{rgb}{0.969, 0.969, 0.969}\color{fgcolor}\begin{figure}[ht!]

{\centering \includegraphics[width=0.49\linewidth,height=0.49\linewidth]{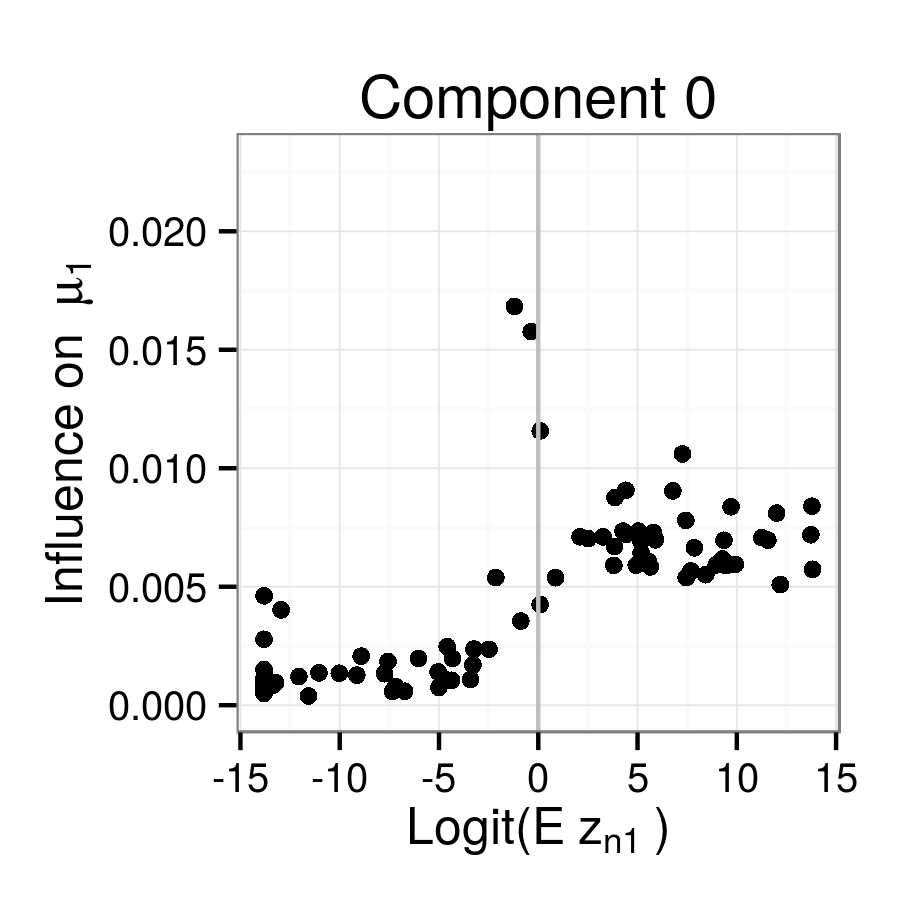} 
\includegraphics[width=0.49\linewidth,height=0.49\linewidth]{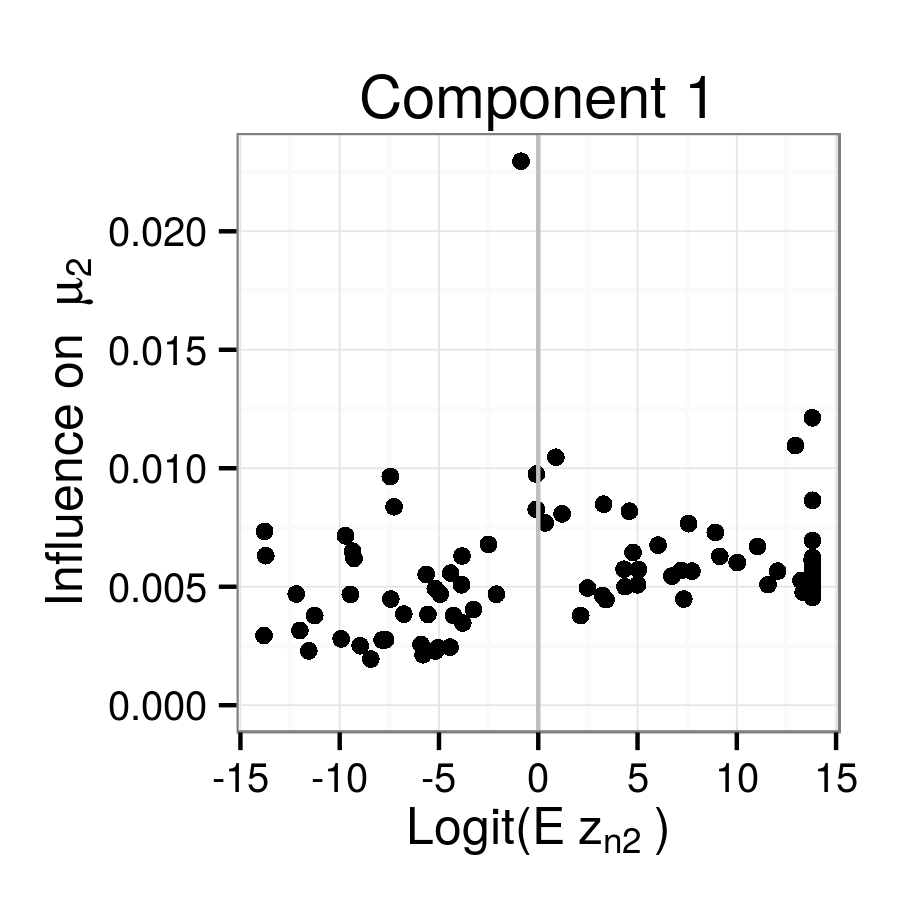} 

}

\caption[Different influence patterns for the two Gaussians in the MNIST dataset]{Different influence patterns for the two Gaussians in the MNIST dataset.  The 100 data points were chosen randomly.  The vertical axis shows the maximum directional derivative of $\mu_k$ with respect to changes in the data point, and the horizontal axis shows the (capped) logit posterior probability that the point came from that component.\label{fig:MNISTInfluenceDistribution}}
\end{figure}

\end{knitrout}

\section{Conclusion} \label{sec:conclusion}

The lack of accurate covariance estimates from the widely used
mean-field variational Bayes (MFVB) methodology has been a longstanding 
shortcoming of MFVB.
We have demonstrated that our method,
linear response variational Bayes (LRVB), 
augments MFVB to deliver
these covariance estimates in time that scales linearly
with the number of data points.
We have also shown how to use LRVB to quickly calculate influence scores, a measure
of the influence of each data point on posterior parameter means. 
Our experiments have focused on mixtures of multivariate Gaussians since these have
traditionally been used to illustrate the difficulties with MFVB covariance estimation.
We hope that in future work our results can be extended to more complex models,
including latent Dirichlet allocation
and Bayesian nonparametric models, where MFVB has proven its practical success.




\iftoggle{icmlformat} {
  \bibliographystyle{icml2015}
}{
  \bibliographystyle{unsrt} 
}
\bibliography{lrvb_icml}


\onecolumn
\appendix
\renewcommand{\appendixpagename}{Supplementary Material}
\appendixpage

\section{Derivations} \label{app:derivs}

\subsection{MFVB for conditional exponential families} \label{app:exp_fams}

First, we require some notation for indexing $\theta$.  Recall that the MFVB
assumption partitions the components of $\theta$ into $J$ groups according to
the factorization
$$
q(\theta) = \prod_{j=1}^J q_j(\theta_j) = \prod_{j=1}^J q(\theta_j)
$$
We follow the common abuse of notation of defining the variational functions
through their arguments by writing $q(\theta_j)$ for $q_j(\theta_j)$.

Each $\theta_j$ is be a $D_j$-dimensional vector, with $\sum_{j=1}^J D_j = D$,
where the whole $\theta$ vector has dimension $D$.  Here and below,
we will define the set $[J] := \{1,...,J\}$.

Let $R(\theta)$ denote a vector of length $\prod_{j\in[J]} (D_j + 1)$
that is the vector of all possible products of the form
$\prod_{j \in [J]}\theta_{ji_j}$, where for each $j$,
$i_j \in \{\emptyset, 1, ..., D_j\}$. 
That is, $R(\theta)$ is the
vector of all possible products of terms of $\theta$ where
with at most one term from each $\theta_j$,
and we define $\theta_{j\emptyset} := 1$.
Let $R_j(\theta)$ denote the same vector, but excluding
terms from $\theta_j$, and $R_{jj'}(\theta)$ denote the 
same vector, but excluding both $\theta_j$ and $\theta_{j'}$. 

To aid in intuition, it will sometimes be useful to explicitly
write the inner product of $R(\theta)$ with a vector $G$ as
a sum of products of components of $\theta_j$.  Let $G$
be a $|R(\theta)|$-length vector with element $G_i$, and
let $R(\theta)_i$ denote the $i$th row of $R(\theta)$.
Then define

\begin{equation}\label{r_indexing_definition}
G^T R(\theta) = \sum_{i=1}^{|R(\theta)|} G_i R(\theta)_i 
           := \sum_{r \in R} G_r \prod_{j \in [J]} \theta_{jr_j}
\end{equation}

Here, we have ``overloaded'' the definition of $R$ to express the sum
over different products of $\theta$.  We define $r_j$ as the 
index of $\theta_j$ in the row of $R(\theta)$ corresponding to
$r_j$, $G_r$ as the element of $G$ corresponding to that index set,
and the sum over $r \in R$ as the sum over all rows.

We define the inner product of $G$ with $R_j(\theta)$ similarly,
only with the result as a $D_j$-length vector.  Specifically,
define

$$
\sum_{r \in R_j} G_r \prod_{k \in [J] \setminus j} \theta_{kr_k}
$$

such that $G_r$ is a $D_j$-sized column vector.  Finally, define

$$
\sum_{r \in R_{jj'}} G_r \prod_{k \in [J] \setminus \{j, j'\} } \theta_{kr_k}
$$

the same way, except where $G_r$ is a $D_j \times D_{j'}$ matrix.

This notation is intended to make it easy to express
sums of products of elements
of $\theta$ in such a way that no two terms from a single $\theta_j$
are multiplied together.
The value of this notation will hopefully become clear
in the following lemmas.

\begin{lemma}\label{lem:mfvb_factorization}
Suppose \eq{variational_exp_def} holds across all $j$; that is,
$$
p(\theta_{j} | \theta_{i \in [J]\setminus j}, x) = \exp(\npp_j^{T} \theta_{j} - A_j(\npp_j)).
$$
Then the posterior $p(\theta \vert x)$ can be written in the form
\begin{equation}\label{eq:mfvb_factorization}
\log p(\theta \vert x) =  \sum_{r \in R} G_r \prod_{j\in[J]} \theta_{jr_j} + \constant
\end{equation}
where the terms $G_r$ and $\constant$ are constant in all $\theta$
\footnote{Strictly speaking, $\constant$ is redundant since $\{\emptyset, ..., \emptyset\} \in R$.
Here and below, for additional clarity we will always write a constant.}.
\end{lemma}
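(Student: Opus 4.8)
The plan is to prove the statement in two stages: first, to observe that the hypothesis forces $f(\theta) := \log p(\theta \vert x)$ to be an \emph{affine} function of each block $\theta_j$ when the remaining blocks are held fixed; and second, to show that any such ``blockwise affine'' function automatically has the multilinear form asserted in \eq{mfvb_factorization}.

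For the first stage, fix an index $j$ and use the elementary conditioning identity
\begin{equation*}
\log p(\theta \vert x) = \log p\bigl(\theta_j \,\vert\, \theta_{i \in [J]\setminus j}, x\bigr) + \log p\bigl(\theta_{i \in [J]\setminus j} \,\vert\, x\bigr).
\end{equation*}
By \eq{variational_exp_def}, the first term on the right equals $\npp_j^{T}\theta_j - A_j(\npp_j)$, and since $\npp_j$ (hence also $A_j(\npp_j)$) depends only on $\theta_{i \in [J]\setminus j}$ and $x$, this is an affine function of $\theta_j$. The second term does not involve $\theta_j$ at all. Therefore, for every $j$, $f$ is affine in $\theta_j$ with the other blocks frozen.

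For the second stage, I would induct on $J$. When $J = 1$, ``$f$ affine in $\theta_1$'' is literally the desired form $f = \constant + \sum_{i \in [D_1]} G_i \theta_{1i}$. For the inductive step, single out the last block: affineness of $f$ in $\theta_J$ lets us write $f(\theta) = g(\theta_1,\ldots,\theta_{J-1}) + \sum_{i \in [D_J]} h_i(\theta_1,\ldots,\theta_{J-1})\,\theta_{Ji}$, where $g$ is $f$ restricted to a fixed value of $\theta_J$ and each $h_i$ is the ($\theta_J$-independent) coefficient of $\theta_{Ji}$, expressible as a difference of values of $f$. Because $f$ is affine in each of $\theta_1,\ldots,\theta_{J-1}$, so are $g$ and every $h_i$ (each being a value, or a difference of values, of $f$). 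Applying the inductive hypothesis to $g$ and to each $h_i$ expands them as sums over products of the first $J-1$ blocks with at most one factor drawn from each block; multiplying the $h_i$-expansions by $\theta_{Ji}$, adding the $g$-expansion (these are the terms with $r_J = \emptyset$), and collecting, we obtain exactly $\sum_{r \in R} G_r \prod_{j \in [J]} \theta_{jr_j}$ with all $G_r$ constant in $\theta$, and the all-$\emptyset$ term may be relabeled $\constant$. This is already phrased in the $R(\theta)$-notation set up at the start of this appendix, so essentially no translation is required.

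The step I expect to demand the most care is not the algebra but the domain bookkeeping behind the second stage: the conditional $p(\theta_j \vert \theta_{i\in[J]\setminus j}, x)$ and its exponential-family representation are valid on the support of $\theta_j$, which for a general exponential family need not be all of $\mathbb{R}^{D_j}$ but is a convex set; for the ``evaluate and difference'' construction of $g$ and the $h_i$ to make sense, and for the coefficients $G_r$ to be pinned down, we need the support of $\theta$ to be a product set with nonempty interior (a blockwise-multilinear function being determined by its values on any such set). I would therefore record at the outset the standing assumption---implicit in positing that \eq{variational_exp_def} holds simultaneously for all $j$---that the posterior has product support, after which the blockwise-affine-implies-multilinear argument goes through with no additional smoothness hypotheses. (If one prefers, the same conclusion follows by noting that $f$ is $C^2$ here and that $\partial^2 f / \partial \theta_{ja}\partial\theta_{jb} \equiv 0$ for all $j$ and $a,b\in[D_j]$, but the finite-difference version above is more elementary and sidesteps interiority questions about $\npp_j$.)
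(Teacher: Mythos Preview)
Your proof is correct and follows essentially the same approach as the paper: observe via the conditioning identity that $\log p(\theta\vert x)$ is affine in each block $\theta_j$, then conclude the multilinear form. The paper's own proof simply asserts the second step (``But this is true for all $\theta_j$, and the above form for $p(\theta|x)$ follows''), whereas you spell it out with a clean induction on $J$; your added remarks on product support are more cautious than the paper but not out of place.
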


\begin{proof}
We see that 
$\log p(\theta | x) = \log p(\theta_{j} | \theta_{i \in [J]\setminus j}, x) +
   \log p(\theta_{i \in [J]\setminus j} | x)$
depends on $\theta_j$ only via the first term in the sum.  By \eq{variational_exp_def},
$$
p(\theta_{j} | \theta_{i \in [J]\setminus j}, x) = \exp(\npp_j^{T} \theta_{j} - A_j(\npp_j))
$$
It follows that $\log p(\theta | x)$ is linear in the vector $\theta_j$.
But this is true for all $\theta_j$, and the above form for $p(\theta | x)$ follows.
\end{proof}

\begin{lemma}\label{lem:eta_definition}
Suppose \eq{variational_exp_def} holds across all $j$.
Then, for the natural parameter $\npp_{j}$, we have the following equations:

\begin{eqnarray}
  \npp_{j} &=& \sum_{r \in R_{j}} G_r
                    \prod_{k\in[J]\setminus j} \theta_{kr_k} \nonumber \\
  \npp_{j} &=& \frac{\partial \log p(\theta | x)}{\partial \theta_j} \nonumber \\
  H        &=& \frac{ \partial \npq }{ \partial \mpq^T }
            = E_{q^{*}} \left(\frac{ \partial^2  \log p(\theta | x)}
                       { \partial \theta \partial \theta^T}\right) \label{eq:h_definition}
\end{eqnarray}

Here, each $G_r$ is an $D_j$-length vector that is constant 
with respect to $\theta$.
\end{lemma}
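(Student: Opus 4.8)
The plan is to establish the three displayed equations in turn, using \lem{mfvb_factorization} for the multilinear structure of $\log p(\theta \vert x)$ and the overloaded $R, R_j, R_{jj'}$ notation introduced above.

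For the first equation, I would start from the factorization $\log p(\theta \vert x) = \sum_{r \in R} G_r \prod_{j \in [J]} \theta_{jr_j} + \constant$ supplied by \lem{mfvb_factorization} and split the outer sum according to whether the entry $r_j$ equals $\emptyset$ or lies in $[D_j]$. The terms with $r_j = \emptyset$ are constant in $\theta_j$ (recall $\theta_{j\emptyset} = 1$); the terms with $r_j \in [D_j]$ are linear in $\theta_j$, and collecting the scalars $G_r$ over $r_j \in [D_j]$ into a $D_j$-vector rewrites $\log p(\theta \vert x)$ as $\left(\sum_{r \in R_j} G_r \prod_{k \in [J] \setminus j} \theta_{kr_k}\right)^{T} \theta_j$ plus a term constant in $\theta_j$. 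On the other hand, \eq{variational_exp_def} gives $\log p(\theta \vert x) = \npp_j^{T} \theta_j - A_j(\npp_j) + \log p(\theta_{i \in [J] \setminus j} \vert x)$, whose $\theta_j$-linear coefficient is $\npp_j$. Uniqueness of the linear part then forces $\npp_j = \sum_{r \in R_j} G_r \prod_{k \in [J] \setminus j} \theta_{kr_k}$, which is exactly \eq{nat_param_sum_of_theta_prods}. The second equation is now immediate: differentiating the decomposition $\log p(\theta \vert x) = \npp_j^{T} \theta_j + (\text{const in } \theta_j)$ with respect to $\theta_j$ returns $\npp_j$, and $\npp_j$ does not itself depend on $\theta_j$.

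For the third equation I would compute the Hessian of $\log p(\theta \vert x)$ block by block. The diagonal block $\partial^2 \log p / \partial \theta_j \partial \theta_j^{T}$ equals $\partial \npp_j / \partial \theta_j^{T} = 0$, because $\npp_j$ involves only $\theta_k$ with $k \neq j$. For $j \neq j'$, differentiating $\npp_j = \sum_{r \in R_j} G_r \prod_{k \in [J] \setminus j} \theta_{kr_k}$ once more in $\theta_{j'}$ — valid term by term since each product carries at most one factor from $\theta_{j'}$ — yields the $D_j \times D_{j'}$ matrix $\partial^2 \log p / \partial \theta_j \partial \theta_{j'}^{T} = \sum_{r \in R_{jj'}} G_r \prod_{k \in [J] \setminus \{j, j'\}} \theta_{kr_k}$, which is multilinear in the remaining blocks. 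Now apply $E_{q^{*}}$: since $q^{*} = \prod_k q^{*}_k$ and the expression has degree at most one in each $\theta_k$, the expectation passes through the product and replaces $\theta_{kr_k}$ by $E_{q^{*}} \theta_{kr_k} = \mpq_{kr_k}$, so $E_{q^{*}}[\partial^2 \log p / \partial \theta_j \partial \theta_{j'}^{T}] = \sum_{r \in R_{jj'}} G_r \prod_{k} \mpq_{kr_k}$ for $j \neq j'$ and $0$ for $j = j'$. On the other side, the relation $\npq_j = \sum_{r \in R_j} G_r \prod_{k \in [J] \setminus j} \mpq_{kr_k}$ of \mysec{mfvb} (itself a consequence of the first equation together with the factorization of $q^{*}$) is multilinear in the $\mpq_k$, so $\partial \npq_j / \partial \mpq_{j'}^{T}$ equals the very same matrix, and $0$ when $j = j'$. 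Stacking these blocks over $j$ and $j'$ gives $\partial \npq / \partial \mpq^{T} = E_{q^{*}}(\partial^2 \log p(\theta \vert x) / \partial \theta \partial \theta^{T})$, which is the definition of $H$ in \eq{h_definition}.

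The index bookkeeping with $R, R_j, R_{jj'}$ and the elementary differentiations are routine. The step that needs care is pushing $E_{q^{*}}$ through the product after the second derivative: this relies on (i) the product form of $q^{*}$, (ii) the fact, inherited from \lem{mfvb_factorization}, that the off-diagonal Hessian block is genuinely multilinear in each remaining $\theta_k$, so only first moments of single blocks appear, and (iii) finiteness of those moments, guaranteed because the natural parameters of each $q^{*}_k$ are interior. Recognizing that this multilinearity is precisely what \lem{mfvb_factorization} provides — and that it is also what makes the diagonal Hessian blocks and $\partial \npq_j / \partial \mpq_j^{T}$ vanish — is the conceptual heart of the argument; the rest is substitution.
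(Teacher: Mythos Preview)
Your proposal is correct and follows essentially the same route as the paper's proof: both derive the first two equations by reading off the $\theta_j$-linear coefficient from the multilinear form supplied by \lem{mfvb_factorization}, and both obtain the third by differentiating $\npq_j = \sum_{r \in R_j} G_r \prod_{k} \mpq_{kr_k}$ in $\mpq_{j'}$ and matching it against $E_{q^{*}}[\partial^2 \log p/\partial \theta_j \partial \theta_{j'}^{T}]$ via the factorization of $q^{*}$. Your version is a bit more explicit than the paper's---you separately verify that the diagonal blocks on both sides vanish and you spell out why $E_{q^{*}}$ passes through the product---but these are elaborations of the same argument rather than a different method.
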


\begin{proof}
The first result follows by collecting the terms for the $i$th component of $\theta_j$ in
\eq{mfvb_factorization} and applying Bayes' theorem.  The second is simply observing that
differentiating with respect to $\theta_j$ is a notationally tidy way
to collect the $j$ terms.

For the third result, recall from \eq{exp_approx_marg} that

\begin{eqnarray*}
\npq_{j} &=& \mbe_{q^{*}} [\npp_{j}] \\
  &=&  \mbe_{q^{*}} \left[ \sum_{r \in R_j} G_r
                    \prod_{k\in[J]\setminus j} \theta_{kr_k} \right]  \\
  &=& \sum_{r \in R_j} G_r \prod_{k\in[J]\setminus j} \mpq_{kr_k}  \Rightarrow \\
\frac{ \partial \npq_{j} }{ \partial \mpq_{j'}^T } &=&
    \sum_{r \in R_{jj'}} G'_r \prod_{k\in[J]\setminus \{j,j'\} } \mpq_{kr_k}\\
  &=& \mbe_{q^{*}} \left[ \sum_{r \in R_{jj'}} G'_r
               \prod_{l\in[J]\setminus \{j, j'\}} \theta_{kr_k} \right] \\
  &=& \mbe_{q^{*}} \left[
     \frac{\partial \log p(\theta | x)}{\partial \theta_{i} \partial \theta_{j'}^T} \right]
\end{eqnarray*}

Note that this proof relied on the fact that only one element of
$\theta_{j'}$ is in each product term, which allowed us to exchange
the derivative with respect to the expectation with the expectation of
the derivative with respect to $\theta_{j'}$.

\end{proof}

\subsection{Linear response} \label{app:lr_derivs}

We here derive the three equalities in \eqs{main_dm_dt}, \eqss{main_dM_dt},
and \eqss{main_dM_dm}, which appear
respectively as three propositions below. In these propositions,
we assume that $p(\theta | x)$
is in the exponential family as above. We will further assume
that all natural parameters (for $p$ or variational approximations)
are in the interior of the parameter space.  The fact that the natural
parameters are on the interior of the feasible space means that there exists an
open ball around them that is also feasible.  Let that ball have radius
$\delta$, and let $t$ be within a $\delta$ ball of the origin.  Then
$p_t(\theta | x)$ is well defined for $t$ in an open set containing zero.
These assumptions
will allow us to apply dominated convergence (cf.\ Section 2.3
of \cite{keener:2010:theoretical}).

\begin{proposition} \label{prop:dm_dt}
  $\frac{d}{dt} \mbe_{p_t} \theta = \truecov_{t}$.
\end{proposition}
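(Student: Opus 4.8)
The plan is to recognize $\mbe_{p_t}\theta$ as the gradient of a cumulant generating function and then to differentiate once more. Recall from the discussion following \eq{perturbed_dens} that $\constant(t)$ normalizes $p_t(\theta\vert x)$, so that $\exp(\constant(t)) = \mbe_p[\exp(t^T\theta)]$; that is, $\constant(t)$ is the cumulant generating function (CGF) of $\theta$ under $p(\theta\vert x)$, as already observed in \mysec{lr}.

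First I would write $\mbe_{p_t}\theta = \exp(-\constant(t)) \int \theta\, \exp(t^T\theta)\, p(\theta\vert x)\, d\theta$ and observe that, granting that one may differentiate under the integral sign, the right-hand side equals $\partial \constant(t)/\partial t$. Differentiating a second time, the Hessian $\partial^2 \constant(t)/\partial t\, \partial t^T$ of the CGF works out, by the usual product-rule computation, to $\mbe_{p_t}[\theta\theta^T] - \mbe_{p_t}[\theta]\,\mbe_{p_t}[\theta]^T$, which is exactly $\cov_{p_t}\theta = \truecov_t$. This is the entire content of the proposition; I would present it as the standard identity ``the Hessian of the CGF of $p$ evaluated at $t$ equals the covariance matrix of the exponentially tilted measure $p_t$.''

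The one step that genuinely needs justification, and the main obstacle, is the interchange of differentiation and integration together with the finiteness of the moments involved. Here I would invoke the standing assumption that the natural parameters of $p(\theta\vert x)$ lie in the interior of the feasible set: this furnishes a radius $\delta > 0$ such that $p_t(\theta\vert x)$ is a bona fide exponential-family density for all $\|t\| < \delta$, and in particular $\mbe_p[\exp(t^T\theta)] < \infty$ on that ball. Standard exponential-family theory, via dominated convergence as in Section~2.3 of \cite{keener:2010:theoretical} (which the excerpt already flags), then yields that $\constant(\cdot)$ is smooth on $\{\|t\| < \delta\}$, that $\theta$ and $\theta\theta^T$ are $p_t$-integrable there, and that both differentiations under the integral sign are legitimate. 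Assembling these facts completes the proof; no further computation is required.
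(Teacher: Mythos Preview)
Your proof is correct and essentially the same as the paper's: both differentiate $\mbe_{p_t}\theta$ under the integral sign, justified by dominated convergence via the interior-natural-parameter assumption and the reference to Section~2.3 of \cite{keener:2010:theoretical}, to obtain $\mbe_{p_t}[\theta\theta^T]-\mbe_{p_t}[\theta]\mbe_{p_t}[\theta]^T=\truecov_t$. Your additional framing of $\mbe_{p_t}\theta$ as the gradient of the cumulant generating function $\constant(t)$ is a slight repackaging, but the underlying computation and justification are identical.
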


\begin{proof}

\begin{align*} 
  \frac{d}{d t^{T}} \mbe_{p_{t}} \theta
    &= \frac{d}{d t^{T}} \int_{\theta}
          \theta e^{t^{T} \theta - c(t)}
        p(\theta | x) d\theta
        \quad \textrm{by the definition of $p_t$ in \eq{perturbed_dens}} \\
    &= \int_{\theta} \theta
          \left[
            \frac{d}{d t^{T}} e^{t^{T} \theta - c(t)}
          \right]
        p(\theta | x) d\theta
        \quad \textrm{by dominated convergence} \\
    &= \int_{\theta} \theta \theta^{T} e^{t^{T} \theta - c(t)} p(\theta | x) d\theta
      - \int_{\theta} \theta e^{t^{T} \theta - c(t)} p(\theta | x) d\theta
        \cdot \frac{d c(t)}{d t^{T}}
        \\
    &= \mbe_{p_t}\left[\theta \theta^T\right] -
       \mbe_{p_t}\left[\theta\right] \mbe_{p_t}\left[\theta\right]^T = \truecov_{t}
\end{align*}

\end{proof}

To approximate $\frac{d \mpq_{t}}{d t^{T}}$, we assume
not only that $\mpq_{t} \approx \mbe_{p_t} \theta$ for any particular $t$ but
further that $\mpq_t$ tracks the true mean $\mbe_{p_t} \theta$ as $t$ varies.
In this case, by \prop{dm_dt}, we have
$$
  \frac{d \mpq_{t}}{d t^{T}}
    \approx \frac{d}{d t^{T}} \mbe_{p_{t}} \theta
    = \truecov_{t},
$$
the first (approximate) equality in \eq{main_dm_dt}.

To derive the final two equalities in \eqs{main_dM_dt} and \eqss{main_dM_dm}, we make use of the following lemma.

\begin{lemma} \label{lem:dM_deta}
  $M_{t,j}$ depends on $t$ only via $\npq_{t,j}$, the natural parameter of the $q_{t,j}^{*}$ distribution.
  And $\frac{dM_{t,j}}{d\eta_{t,j}^T} = \Sigma_{q_{t,j}^{*}}$.
\end{lemma}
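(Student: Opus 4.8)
The plan is to factor $M_{t,j}$ through the natural parameter $\npq_{t,j}$ of $q^{*}_{t,j}$ and then invoke the standard exponential-family identity that the Hessian of a log-partition function is the covariance of the sufficient statistic.

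First I would record how the perturbation acts on the conditionals. Writing $t = (t_1,\ldots,t_J)$ for the partition of the $D$-vector $t$ aligned with the blocks of $\theta$, we have $t^{T}\theta = \sum_{k\in[J]} t_k^{T}\theta_k$, so adding $t^{T}\theta$ to $\log p(\theta|x)$ as in \eq{perturbed_dens} adds to $\log p_t(\theta_j | \theta_{i\in[J]\setminus j}, x)$ only the term $t_j^{T}\theta_j$, which is linear in $\theta_j$; the remaining blocks of $t$ contribute terms constant in $\theta_j$ that are absorbed into the conditional's normalizer. Hence the perturbed conditional lies in the same exponential family \eq{variational_exp_def} as the unperturbed one, with local natural parameter $\npp_{t,j} = \npp_j + t_j$ and the same local log-partition function $A_j$. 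Taking $\mbe_{q_t^{*}}$ of both sides and using that the products in \eq{nat_param_sum_of_theta_prods} factor across the $q^{*}_{t,k}$ (exactly as in \eq{exp_approx_marg}), the natural parameter of $q^{*}_{t,j}$ is
\[
  \npq_{t,j} = \mbe_{q_t^{*}} \npp_{t,j}
    = \sum_{r \in R_j} G_r \prod_{k\in[J]\setminus j} \mpq_{t,kr_k} + t_j ,
\]
a function of $\mpq_{t,i\in[J]\setminus j}$ and $t$.

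Next I would observe that, since $q^{*}_{t,j}(\theta_j) = \exp(\npq_{t,j}^{T}\theta_j - A_j(\npq_{t,j}))$, its mean parameter is $\mpq_{t,j} = \nabla A_j(\npq_{t,j})$, a map of $\npq_{t,j}$ alone with no further dependence on $t$. Composing the two facts, $M_{t,j}(\mpq_{t,i\in[J]\setminus j}) = \nabla A_j\big(\npq_{t,j}(\mpq_{t,i\in[J]\setminus j},t)\big)$, which exhibits the dependence of $M_{t,j}$ on $t$ as occurring only through $\npq_{t,j}$; this is the first claim.

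For the derivative, differentiating $\mpq_{t,j} = \nabla A_j(\npq_{t,j})$ gives $\frac{dM_{t,j}}{d\npq_{t,j}^{T}} = \nabla^2 A_j(\npq_{t,j})$, and the interior-parameter assumption (which licenses differentiating under the integral, as in the discussion preceding \prop{dm_dt}) together with the usual cumulant identity gives $\nabla^2 A_j(\npq_{t,j}) = \cov_{q^{*}_{t,j}}(\theta_j) = \Sigma_{q^{*}_{t,j}}$. The one step needing care is the first: making precise that the perturbation merely shifts the conditional natural parameter rather than changing the exponential family, which is where the block structure of $t^{T}\theta$ and the multilinear form \eq{nat_param_sum_of_theta_prods} do the work; everything after that is the standard mean-versus-natural-parameter calculus for exponential families.
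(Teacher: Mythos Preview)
Your proof is correct and follows essentially the same approach as the paper: once $q^{*}_{t,j}$ is recognized as an exponential family member with natural parameter $\npq_{t,j}$ and log-partition $A_j$, the two claims reduce to the standard identities $\mpq_{t,j}=\nabla A_j(\npq_{t,j})$ and $\nabla^2 A_j(\npq_{t,j})=\Sigma_{q^{*}_{t,j}}$. The paper expresses this by writing the expectation integral and differentiating under it via dominated convergence, whereas you invoke the cumulant identities by name; these are the same computation, and your extra derivation of the explicit form of $\npq_{t,j}$ (which the paper defers to the next proposition) is correct but not needed for this lemma.
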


\begin{proof}
The first part of the lemma follows from writing the definition of $M_{t,j}$:
$$
  M_{t,j}
    = \mbe_{q_{t,j}^{*}} \theta_j
    = \int_{\theta_j}
            \theta_j \exp \left(
              \npq_{t,j}^{T} \theta_j - A_j\left(\npq_{t,j}\right)
            \right)
          d \theta_j.
$$

For the second part,
\begin{align*}
  \frac{d M_{t,j}}{d \npq_{t,j}^T}
      &= \int_{\theta_j}
          \frac{d}{d \npq_{t,j}^T}
            \theta_j \exp \left(
              \npq_{t,j}^{T} \theta_j - A_j\left(\npq_{t,j}\right)
            \right)
          d \theta_j \quad \textrm{by dominated convergence} \\
      &= \int_{\theta_j}
            \theta_j
              \left[ \theta_j^{T} - \mbe_{q_{t,j}^{*}} \theta_j^T \right]
              \exp \left(
                \npq_{t,j}^{T} \theta_j - A_j\left(\npq_{t,j}\right)
              \right)
          d \theta_j \\
      &= \Sigma_{q_{t,j}^{*}}
\end{align*}
\end{proof}

\begin{proposition}
$
  \frac{\partial M_t}{\partial t^T} = \vbcov_{t}.
$
\end{proposition}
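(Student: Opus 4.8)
The plan is to compute $\partial M_t / \partial t^T$ one factor at a time, keeping firmly in mind that this partial derivative freezes the mean argument $\mpq_t$ and differentiates only through the explicit occurrences of $t$. The crucial structural fact to exploit is that the perturbation in \eq{perturbed_dens} merely translates natural parameters. Writing $t = (t_1,\dots,t_J)$ conformably with $\theta = (\theta_1,\dots,\theta_J)$, so that $t^T\theta = \sum_j t_j^T\theta_j$, only the summand $t_j^T\theta_j$ involves $\theta_j$; comparing with \eq{variational_exp_def} and \lem{mfvb_factorization} (which gives that $\log p(\theta\vert x)$ is linear in $\theta_j$ with coefficient $\npp_j$), the perturbed conditional $p_t(\theta_j \vert \theta_{i\in[J]\setminus j}, x)$ lies in the same exponential family with natural parameter $\npp_j + t_j$.

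Next I would push this through the MFVB update. Applying \eq{exp_approx_marg} to $p_t$ in place of $p$, the natural parameter of $q_{t,j}^{*}$ is $\npq_{t,j} = \mbe_{q_{t,i}^{*}: i\in[J]\setminus j}[\npp_j + t_j]$, and since $t_j$ is constant in $\theta$ this is $\mbe_{q_{t,i}^{*}: i\in[J]\setminus j}[\npp_j] + t_j$. By \eq{nat_param_sum_of_theta_prods} the first term is a function of the means $\mpq_{t,kr_k}$ with $k\neq j$ only. Consequently, holding $\mpq_t$ fixed, $\partial \npq_{t,j}/\partial t_j^T = I_{D_j}$ and $\partial \npq_{t,j}/\partial t_{j'}^T = 0$ for $j'\neq j$.

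The last step is the chain rule combined with \lem{dM_deta}, which states that $M_{t,j}$ depends on $t$ only through $\npq_{t,j}$ and that $dM_{t,j}/d\npq_{t,j}^T = \Sigma_{q_{t,j}^{*}}$. Composing, $\partial M_{t,j}/\partial t^T$ is the $D_j \times D$ matrix whose $j$-block equals $\Sigma_{q_{t,j}^{*}}$ and whose other blocks vanish; stacking over $j$ shows $\partial M_t/\partial t^T$ is block diagonal with $j$-th diagonal block $\Sigma_{q_{t,j}^{*}}$. Because $q_t^{*} = \prod_j q_{t,j}^{*}$ factorizes, the covariance $\vbcov_t = \cov_{q_t^{*}}\theta$ is precisely this block-diagonal matrix, giving $\partial M_t/\partial t^T = \vbcov_t$.

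I expect the only real subtlety to be the bookkeeping around the two roles of $t$: maintaining the partial-derivative semantics (freeze $\mpq_t$, differentiate the explicit $t$ only) and verifying cleanly that the perturbation enters the $j$-th variational update as a pure additive shift $t_j$ of the natural parameter, so that $\partial\npq_{t,j}/\partial t^T$ is the identity in the $j$-block and zero elsewhere. Once that is in place, the result is just the chain rule, \lem{dM_deta}, and the block structure of a factorized covariance.
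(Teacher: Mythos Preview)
Your proposal is correct and follows essentially the same approach as the paper: both argue via the chain rule through $\npq_{t,j}$, use \lem{dM_deta} for the factor $dM_{t,j}/d\npq_{t,j}^T = \Sigma_{q_{t,j}^{*}}$, verify from the explicit form of $\npq_{t,j}$ (which equals the unperturbed expression plus $t_j$) that $\partial\npq_{t,j}/\partial t_i^T$ is the identity on the $j$-block and zero off it, and then identify the resulting block-diagonal matrix with $\vbcov_t$. Your write-up even makes explicit the last identification via the factorization of $q_t^{*}$, which the paper leaves implicit.
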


\begin{proof}
By \lem{dM_deta}, we have for any indices $i$ and $j$ in $[J]$ that
\begin{equation} \label{eq:dM_dt}
  \frac{\partial M_{t,j}}{\partial t_i^T}
    = \frac{d M_{t,j}}{d \npq_{t,j}^T} \frac{\partial \npq_{t,j}}{\partial t_i^T},
\end{equation}
where the first factor is also given by \lem{dM_deta}.
It remains to find the second factor, $\frac{\partial \npq_{t,j}}{\partial t_i^T}$.
By the discussion
after \eq{variational_exp_def} and the construction of $p_t$,
the natural parameter $\npp_{t,j}$
of $p_{t}\left(\theta_{j} | \theta_{i \in [J]\setminus j}, x\right)$
satisfies
$$
  \npp_{t,j} = \sum_{r \in R_j} G_r
                    \prod_{k\in[J]\setminus j} \theta_{kr_k}  + t_j.
$$
So, as in the derivation of \eq{exp_approx_marg},
the natural parameter $\npq_{t,j}$ of $q_j^*(\theta_j)$ 
satisfies
\begin{equation} \label{eq:etat}
  \npq_{t,ji}
    = \sum_{r \in R_j} G_r \prod_{k\in[J]\setminus j} \mpq_{t,kr_k} + t_{j}
\end{equation}
for $\mpq_{t,r} := \mbe_{q^*_{t,r}} \theta_{r}$.

Let $d_j$ be the dimension of $\theta_j$ and hence the
dimension of $\npq_{t,j}$ and $t_j$. Hence,
$$
  \frac{\partial \npq_{t,j}}{\partial t_i^T}
    = \left\{ \begin{array}{ll}
        I_{d_j} & j = i \\
        0_{d_j,d_i} & \mathrm{else}
      \end{array} \right. ,
$$
where $I_{a}$ is the identity matrix of dimension $a$, and $0_{a,b}$ is the
all zeros matrix of dimension $a \times b$.

Finally, by \eq{dM_dt}, \lem{dM_deta}, and the expression for
$\frac{\partial \npq_{t,j}}{\partial t_i^T}$ just obtained, we have
$$
  \frac{\partial M_t}{\partial t^T} = \vbcov_{t} I_{D} = \vbcov_{t}.
$$
\end{proof}

\begin{proposition}
$
  \frac{d M_t}{d m_t^T} = \vbcov_{t} \frac{ \partial \eta_t }{ \partial m_t^T }.
$
\end{proposition}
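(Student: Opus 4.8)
The plan is to obtain the identity by the chain rule, splitting the dependence of $M_t$ on $m_t$ into (i) the dependence of each block $M_{t,j}$ on its own natural parameter $\eta_{t,j}$, which is already computed in \lem{dM_deta}, and (ii) the dependence of the natural parameters $\eta_{t,j}$ on the mean vector $m_t$, which is the factor $\partial \eta_t / \partial m_t^T$. The block structure of the mean-field family is then used to reassemble these pieces into the stated matrix product.

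Concretely, first I would invoke \lem{dM_deta}: it tells us both that $M_{t,j}$ depends on $t$ and on $m_t$ only through the scalar-vector $\eta_{t,j}$ (via \eq{etat}, $\eta_{t,j}$ is an explicit function of $t_j$ and of the means $m_{t,k}$, $k \in [J]\setminus j$) and that $\frac{d M_{t,j}}{d \eta_{t,j}^T} = \Sigma_{q_{t,j}^{*}}$. Applying the chain rule blockwise gives, for each $j$,
\[
  \frac{d M_{t,j}}{d m_t^T}
    = \frac{d M_{t,j}}{d \eta_{t,j}^T}\,\frac{\partial \eta_{t,j}}{\partial m_t^T}
    = \Sigma_{q_{t,j}^{*}}\,\frac{\partial \eta_{t,j}}{\partial m_t^T}.
\]
Then I would stack these relations over $j \in [J]$: the left sides stack to $\frac{d M_t}{d m_t^T}$ and the right-hand factors $\frac{\partial \eta_{t,j}}{\partial m_t^T}$ stack to $\frac{\partial \eta_t}{\partial m_t^T}$. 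Since the variational approximation factorizes as $q_t^*(\theta) = \prod_j q_{t,j}^*(\theta_j)$, the matrix $\vbcov_t = \cov_{q_t^*}\theta$ is block diagonal with $j$th block $\Sigma_{q_{t,j}^{*}}$, so left-multiplication by $\vbcov_t$ exactly reproduces the blockwise products above, yielding $\frac{d M_t}{d m_t^T} = \vbcov_t \frac{\partial \eta_t}{\partial m_t^T}$.

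The only point requiring care is the index bookkeeping in the stacking step: one must verify that the $j$th row-block of $\vbcov_t \frac{\partial \eta_t}{\partial m_t^T}$ is $\Sigma_{q_{t,j}^{*}}$ times the $j$th row-block of $\frac{\partial \eta_t}{\partial m_t^T}$, with no cross-terms mixing different $j$ — which is precisely what block-diagonality of $\vbcov_t$ (a consequence of the mean-field factorization, as used in \mysec{scaling_formulas}) guarantees. Beyond this and the dominated-convergence justification already supplied in \lem{dM_deta}, there is no analytic obstacle; the statement is essentially a repackaging of \lem{dM_deta} plus the chain rule, and it sets up \eq{main_dM_dm} once combined with $H_t = \frac{\partial \eta_t}{\partial m_t^T}$ from \eq{h_definition}.
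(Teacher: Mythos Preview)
Your proposal is correct and follows essentially the same approach as the paper: the paper's proof simply writes the blockwise chain rule $\frac{\partial M_{t,j}}{\partial m_{t,i}^T} = \frac{d M_{t,j}}{d \npq_{t,j}^T}\,\frac{\partial \npq_{t,j}}{\partial m_{t,i}^T}$ and invokes \lem{dM_deta}, while you spell out the stacking over $j$ and the block-diagonality of $\vbcov_t$ that the paper leaves implicit.
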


\begin{proof}
By \lem{dM_deta} and analogous to \eq{dM_dt}, we have
\begin{equation} \label{eq:dM_dm}
  \frac{\partial M_{t,j}}{\partial m_{t,i}^T}
    = \frac{d M_{t,j}}{d \npq_{t,j}^T} \frac{\partial \npq_{t,j}}{\partial m_{t,i}^T}.
\end{equation}
The result follows immediately from \lem{dM_deta}.
\end{proof}

\section{Multivariate normal posteriors and SEM} \label{app:SEM}

For any target distribution $p(\theta | x)$, it is well-known that MFVB cannot be used
to estimate the covariances between the components of $\theta$.
In particular, if $q^*$ is the estimate
of $p(\theta | x)$ returned by MFVB,
$q^*$ will have a block-diagonal covariance matrix---no matter the form
of the covariance of $p(\theta | x)$. 
By contrast, the next result shows that the LRVB covariance estimate is exactly correct in the 
case where the target distribution, $p(\theta|x)$, is (multivariate) normal.

In order to prove this result, we will rely on the following lemma.
\begin{lemma} \label{lem:lrvb_mvn}
  Consider a target posterior distribution characterized by $p(\theta | x) = \gauss(\theta | \mu, \Sigma)$,
  where $\mu$ and $\Sigma$ may depend on $x$, and $\Sigma$ is invertible.
  Let $\theta = (\theta_{1}, \ldots, \theta_{J})$,
  and consider a MFVB approximation to $p(\theta| x)$ that factorizes as $q(\theta) = \prod_{j} q(\theta_j)$.
  Then the variational posterior means are the true posterior means; i.e. $m_j = \mu_j$ for all $j$ between 
  $1$ and $J$.
\end{lemma}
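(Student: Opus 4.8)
The plan is to compute the MFVB coordinate update explicitly for a Gaussian target, observe that each factor $q^{*}_j$ is itself Gaussian, and then solve the resulting linear system for the means. Write $\Lambda := \Sigma^{-1}$ for the (invertible) precision matrix and partition it into blocks $\Lambda_{jk}$ conformally with $\theta = (\theta_1,\dots,\theta_J)$. Since $\log p(\theta \mid x) = -\tfrac12 (\theta - \mu)^T \Lambda (\theta - \mu) + \constant$ is a quadratic form, it is linear in each $\theta_j$ in the sense of \eq{variational_exp_def}, so the machinery of \mysec{mfvb} applies and the only real work is identifying the fixed point.

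First I would substitute into the mean-field fixed-point equation \eq{kl_div}, $\log q^{*}_j(\theta_j) = \mbe_{q^{*}_i : i \in [J]\setminus j} \log p(\theta, x) + \constant$. Expanding $\log p(\theta\mid x)$ in blocks, the only pieces depending on $\theta_j$ are the diagonal block $(j,j)$ and the off-diagonal blocks $(j,k)$ and $(k,j)$ for $k \neq j$; every block involving neither copy of $\theta_j$ is constant in $\theta_j$ and is absorbed into $\constant$. Using the factorization $q^{*} = \prod_i q^{*}_i$ to replace each $\theta_k$ with $k\neq j$ by its mean $m_k := \mbe_{q^{*}_k}\theta_k$, and $\Lambda_{kj} = \Lambda_{jk}^T$ to merge the two off-diagonal contributions, the update collapses to
$$
\log q^{*}_j(\theta_j) = -\tfrac12 (\theta_j - \mu_j)^T \Lambda_{jj} (\theta_j - \mu_j) - (\theta_j - \mu_j)^T \sum_{k \neq j} \Lambda_{jk}(m_k - \mu_k) + \constant .
$$
Thus $q^{*}_j$ is Gaussian with precision $\Lambda_{jj}$ (a principal block of $\Lambda$, hence positive definite), and completing the square gives $\Lambda_{jj}(m_j - \mu_j) = -\sum_{k\neq j}\Lambda_{jk}(m_k - \mu_k)$, i.e. $\sum_{k=1}^{J}\Lambda_{jk}(m_k - \mu_k) = 0$ for every $j$.

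Stacking these equations over $j$ yields the single matrix identity $\Lambda(m - \mu) = 0$ with $m := (m_1,\dots,m_J)$, and since $\Lambda = \Sigma^{-1}$ is invertible by hypothesis we conclude $m = \mu$, i.e. $m_j = \mu_j$ for all $j$. There is no deep obstacle here; the computation is routine bookkeeping, and the only steps that need care are (i) checking that the $q^{*}_{i\neq j}$-expectations of the cross terms $\mbe[(\theta_k - \mu_k)^T \Lambda_{kl}(\theta_l - \mu_l)]$ with $k,l \neq j$ really are constant in $\theta_j$, where the product form of $q^{*}$ is essential, and (ii) correctly reassembling the per-block equations into $\Lambda(m-\mu)=0$ via the symmetry of $\Lambda$. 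A slicker, more self-contained route avoids identifying $q^{*}_j$ altogether: for any factorized $q = \prod_j q_j$ with block means $m$, let $\tilde q$ be $q$ with each factor translated so its mean becomes $\mu_j$; differential entropy and covariance are translation invariant, so since $\log p(\theta\mid x)$ is quadratic one computes $\kl(q\|p) - \kl(\tilde q\|p) = \tfrac12 (m - \mu)^T \Sigma^{-1}(m - \mu) \ge 0$, with equality iff $m = \mu$, whence the MFVB optimum must satisfy $m = \mu$.
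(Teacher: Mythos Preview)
Your primary argument is essentially the paper's own proof: derive the Gaussian coordinate update, read off the fixed-point equation $\Lambda_{jj}(m_j-\mu_j)=-\sum_{k\neq j}\Lambda_{jk}(m_k-\mu_k)$, stack over $j$ to obtain $\Lambda(m-\mu)=0$, and invoke invertibility of $\Lambda$. The paper writes out the stacking step in explicit block-matrix form, but the logic is identical.

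Your alternative route via translation of the factors is genuinely different from anything in the paper and is worth highlighting. It bypasses the coordinate-ascent fixed point entirely: since differential entropy and the block covariances are translation invariant, the only part of $\kl(q\|p)$ that changes under a simultaneous shift $\theta_j \mapsto \theta_j + (\mu_j - m_j)$ is the quadratic cross-entropy term, which drops by exactly $\tfrac12(m-\mu)^T\Lambda(m-\mu)$. This shows directly that any factorized $q$ with $m\neq\mu$ is strictly dominated, so the global MFVB optimum must satisfy $m=\mu$. The advantage over the fixed-point argument (yours and the paper's) is that it establishes optimality rather than merely stationarity, and it never needs to identify $q^{*}_j$ or argue that the stationary point is unique; the cost is that it is specific to quadratic $\log p$ and does not generalize to the exponential-family setting the rest of the paper works in.
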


\begin{proof}
  The derivation of MFVB for the multivariate normal can be found in Section 10.1.2 of
  \cite{bishop:2006:pattern}; we highlight some key results here.
  Let $\Lambda = \Sigma^{-1}$. Let the $j$ index on a row or column correspond to $\theta_j$,
  and let the $-j$ index
  correspond to $\{\theta_{i}: i \in [J]\setminus j\}$. E.g., for $j=1$,
  $$
    \Lambda
      = \left[ \begin{array}{ll}
          \Lambda_{11} & \Lambda_{1,-1} \\
          \Lambda_{-1,1} & \Lambda_{-1,-1}
        \end{array} \right].
  $$
  By the assumption that $p(\theta | x) = \gauss(\theta | \mu, \Sigma)$, we have
\begin{equation}\label{eq:mvn_variational_dist}
    \log p(\theta_{j} | \theta_{i \in [J]\setminus j}, x)
      = -\frac{1}{2} (\theta_{j} - \mu_{j})^{T} \Lambda_{jj} (\theta_j - \mu_j) +
         (\theta_{j} - \mu_{j})^{T} \Lambda_{j,-j} (\theta_{-j} - \mu_{-j}) + \constant,
\end{equation}

  where the final term is constant with respect to $\theta_{j}$.
  It follows that
  \begin{align*}
    \log q^{*}_{j}(\theta_j)
      &= \mbe_{q^{*}_{i}: i \in [J]\setminus j} \log p(\theta, x) + \constant \\
      &= -\frac{1}{2} \theta_{j}^{T} \Lambda_{jj} \theta_j + \theta_j \mu_j \Lambda_{jj} - \theta_j \Lambda_{j,-j} (\mbe_{q^{*}} \theta_{-j} - \mu_{-j}).
  \end{align*}
  So 
  \begin{equation*}
    q^*_j(\theta_j) = \gauss(\theta_j | m_{j}, \Lambda_{jj}^{-1}),
  \end{equation*}
  with mean parameters
  \begin{equation} \label{eq:mvn_stable_point}
    m_{j} = \mbe_{q^{*}_j} \theta_j = \mu_{j} - \Lambda_{jj}^{-1} \Lambda_{j,-j} (m_{-j} - \mu_{-j})
  \end{equation}
  as well as an equation for $\mbe_{q^{*}} \theta^T \theta$.

Note that $\Lambda_{jj}$ must be invertible, for if it
were not, $\Sigma$ would not be invertible.

The solution $m = \mu$ is a unique stable point for
\eq{mvn_stable_point}, since the fixed point equations for each $j$
can be stacked and rearranged to give
\begin{eqnarray*}
m-\mu & = & -\left[\begin{array}{ccccc}
0 & \Lambda_{11}^{-1}\Lambda_{12} & \cdots & \Lambda_{11}^{-1}\Lambda_{1\left(J-1\right)} & \Lambda_{11}^{-1}\Lambda_{1J}\\
\vdots &  & \ddots &  & \vdots\\
\Lambda_{JJ}^{-1}\Lambda_{J1} & \Lambda_{JJ}^{-1}\Lambda_{J2} & \cdots & \Lambda_{JJ}^{-1}\Lambda_{J\left(J-1\right)} & 0
\end{array}\right]\left(m-\mu\right)\\
 & = & -\left[\begin{array}{ccccc}
\Lambda_{11}^{-1} & \cdots & 0 & \cdots & 0\\
\vdots & \ddots &  &  & \vdots\\
0 &  & \ddots &  & 0\\
\vdots &  &  & \ddots & \vdots\\
0 & \cdots & 0 & \cdots & \Lambda_{JJ}^{-1}
\end{array}\right]\left[\begin{array}{ccccc}
0 & \Lambda_{12} & \cdots & \Lambda_{1\left(J-1\right)} & \Lambda_{1J}\\
\vdots &  & \ddots &  & \vdots\\
\Lambda_{J1} & \Lambda_{J2} & \cdots & \Lambda_{J\left(J-1\right)} & 0
\end{array}\right]\left(m-\mu\right)\Leftrightarrow\\
0 & = & \left[\begin{array}{ccccc}
\Lambda_{11} & \cdots & 0 & \cdots & 0\\
\vdots & \ddots &  &  & \vdots\\
0 &  & \ddots &  & 0\\
\vdots &  &  & \ddots & \vdots\\
0 & \cdots & 0 & \cdots & \Lambda_{JJ}
\end{array}\right]\left(m-\mu\right) +\\
&& \left[\begin{array}{ccccc}
0 & \Lambda_{12} & \cdots & \Lambda_{1\left(J-1\right)} & \Lambda_{1J}\\
\vdots &  & \ddots &  & \vdots\\
\Lambda_{J1} & \Lambda_{J2} & \cdots & \Lambda_{J\left(J-1\right)} & 0
\end{array}\right]\left(m-\mu\right)\Leftrightarrow\\
0 & = & \Lambda \left(m-\mu\right) \Leftrightarrow\\
m & = & \mu.
\end{eqnarray*}
The last step follows from the assumption that $\Sigma$ (and hence $\Lambda$)
is invertible.  It follows that $\mu$ is the unique stable point of
\eq{mvn_stable_point}.

\end{proof}

\begin{proposition} \label{prop:lrvb_mvn}
  Assume we are in the setting of \lem{lrvb_mvn}, where additionally
  $\mu$ and $\Sigma$ are on the interior of the feasible parameter space.
  Then the LRVB covariance estimate exactly captures the true covariance,
  $\hat{\Sigma} = \Sigma$.

\end{proposition}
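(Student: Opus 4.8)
The plan is to apply \lem{lrvb_mvn} not just to $p(\theta | x)$ but to every member of the perturbed family $p_t(\theta | x)$ from \eq{perturbed_dens}. First I would note that a linear tilt of a Gaussian is again Gaussian: completing the square in $-\tfrac12(\theta-\mu)^T\Lambda(\theta-\mu) + t^T\theta$ (with $\Lambda := \Sigma^{-1}$) shows $p_t(\theta | x) = \gauss(\theta \mid \mu + \Sigma t,\ \Sigma)$. By the interiority hypothesis the normalizer $\constant(t)$ is finite on a neighborhood of $0$, so this is a bona fide density there, with the same invertible covariance $\Sigma$. Hence \lem{lrvb_mvn} applies verbatim to $p_t$: its MFVB approximation $q_t^*$ has marginal means equal to the true perturbed means, $\mpq_t = \mu + \Sigma t$, and moreover this is the \emph{unique} fixed point of the MFVB map $M_t$, so $\mpq_t$ is a well-defined differentiable function of $t$ near $0$.

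Differentiating this exact identity at $t=0$ gives $\left.\frac{d\mpq_t}{dt^T}\right|_{t=0} = \Sigma$. Since $\hat{\truecov}$ is defined as precisely this derivative (the line preceding \eq{lrvb_est}), we already obtain $\hat{\truecov} = \truecov$. To connect this to the closed form $\hat{\truecov} = (I - \vbcov H)^{-1}\vbcov$, I would observe that the sole approximation in the derivation of \eq{lrvb_est} was the step $\frac{d\mpq_t}{dt^T} \approx \frac{d}{dt^T}\mbe_{p_t}\theta$ in \eq{main_dm_dt}; here that step is an exact equality because $\mpq_t = \mbe_{p_t}\theta$ holds for all $t$ near $0$. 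So \eq{mean_derivs} together with \eqs{main_dM_dt} and \eqss{main_dM_dm} (both exact in this quadratic-$\log p$ setting) force $(I - \vbcov H)^{-1}\vbcov = \truecov$ as well.

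As an independent check -- and the form the detailed argument in \app{SEM} presumably takes, given the stated SEM analogy -- I would compute $\vbcov$ and $H$ directly from \lem{lrvb_mvn}'s proof. Since $q_j^*(\theta_j) = \gauss(\theta_j \mid m_j,\ \Lambda_{jj}^{-1})$, the variational covariance is block diagonal with blocks $\Lambda_{jj}^{-1}$; writing $B$ for the block-diagonal part of $\Lambda$, this is $\vbcov = B^{-1}$. Reading the Jacobian of the mean map off \eq{mvn_stable_point}, $\partial M_j/\partial m_{j'}^T = -\Lambda_{jj}^{-1}\Lambda_{jj'}$ for $j' \ne j$ and $0$ for $j'=j$, so $\vbcov H = \partial M/\partial \mpq^T = I - B^{-1}\Lambda$, i.e.\ $H = B - \Lambda$. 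Then $(I - \vbcov H)^{-1}\vbcov = (I - (I - B^{-1}\Lambda))^{-1} B^{-1} = (B^{-1}\Lambda)^{-1} B^{-1} = \Lambda^{-1} = \truecov$.

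I expect the only genuine subtlety -- and the point most easily mishandled -- to be the identification of $H$. The naive reading $H = \mbe_{q^*}(\partial^2 \log p / \partial\theta\partial\theta^T) = -\Lambda$ (the Hessian of a quadratic is constant) would give $(B + \Lambda)^{-1} \ne \truecov$. The resolution is that the object entering \eq{lrvb_est} is really $\vbcov^{-1}\,dM/d\mpq^T$, whose diagonal blocks vanish by construction; one must therefore use the ``off-block-diagonal'' Hessian $B - \Lambda$, equivalently the second-derivative operator applied to the part of $\log p(\theta | x)$ that is multilinear across the blocks $\theta_1,\dots,\theta_J$ once each term $-\tfrac12\theta_j^T\Lambda_{jj}\theta_j$ is absorbed into a per-block base measure. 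In the exponential-family setting of \lem{eta_definition} the two readings coincide because $\log p$ is already multilinear in the $\theta_j$, but here they differ. A secondary point worth stating carefully is the uniqueness of the MFVB fixed point for each $t$ near $0$, which is what makes $\mpq_t$ differentiable and legitimizes evaluating the derivative at the fixed point; everything else -- the Gaussian tilt identity, the appeal to \lem{lrvb_mvn}, and the matrix algebra -- is routine.
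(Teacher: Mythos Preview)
Your core argument (first two paragraphs) matches the paper's proof: tilt $p$ to $p_t$, observe it is again Gaussian so \lem{lrvb_mvn} applies and gives $\mpq_t = \mbe_{p_t}\theta$ for all $t$ near $0$; hence the lone approximation in \eq{main_dm_dt} is an equality and $\hat{\truecov} := \left.d\mpq_t/dt^T\right|_{t=0} = \truecov$. The paper's proof stops there and does not compute the tilt $\mu+\truecov t$ or differentiate explicitly, but the logic is identical.

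Your third and fourth paragraphs --- the direct verification that $(I-\vbcov H)^{-1}\vbcov = \truecov$ by reading $\vbcov = B^{-1}$ and $\partial M/\partial \mpq^T = I - B^{-1}\Lambda$ off \eq{mvn_stable_point} --- go beyond the paper. The paper never carries out this check, and the subtlety you flag about $H$ is real: the identity $H = \mbe_{q^*}(\partial^2\log p/\partial\theta\partial\theta^T)$ in \lem{eta_definition} is proved under the multilinearity coming from \eq{variational_exp_def}, which for the Gaussian holds only once the block-diagonal quadratic $-\tfrac12\theta_j^T\Lambda_{jj}\theta_j$ is absorbed into a base measure, so the operative Hessian is the off-block-diagonal $B-\Lambda$ rather than $-\Lambda$. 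The paper's proof sidesteps this by working with the defining derivative rather than the closed form; your additional computation is a useful cross-check and surfaces a bookkeeping point the paper leaves implicit.
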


\begin{proof}

  Consider the perturbation for LRVB defined in \eq{perturbed_dens}.
  By perturbing the log likelihood, we change both the true means $\mu_t$
  and the variational solutions, $m_t$. The result is a valid
  density function since the original $\mu$ and $\Sigma$ are on the
  interior of the parameter space.
  By \lem{lrvb_mvn}, the MFVB solutions are exactly the true
  means, so $m_{t,j} = \mu_{t,j}$, and the derivatives are the same
  as well.  This means that the first term in \eq{lrvb_est} is
  not approximate, i.e.
  \begin{equation*}
  \frac{d \mpq_{t}}{d t^{T}}
    = \frac{d}{d t^{T}} \mbe_{p_{t}} \theta
    = \truecov_{t},
  \end{equation*}
  It follows from the arguments in \app{SEM} that the LRVB covariance
  matrix is exact, and $\hat{\Sigma} = \Sigma$.
  
\end{proof}

\subsection{Comparison with supplemented expectation-maximization}\label{subsec:SEM}

This result about the multivariate normal distribution
draws a connection between LRVB
corrections and the ``supplemented expectation-maximization'' (SEM)
method of \cite{meng:1991:using}.  SEM is an asymptotically
exact covariance correction for the EM algorithm that transforms
the full-data Fisher information matrix into the observed-data Fisher
information matrix using a correction that is formally similar to
\eq{lrvb_est}.  In this section, we argue that this similarity
is not a coincidence; in fact the SEM correction is an
asymptotic version of LRVB with two variational blocks,
one for the missing data and one for the unknown parameters.

Although LRVB as described here requires a prior 
(unlike SEM, which supplements the MLE),
the two covariance corrections coincide when
the full information likelihood is approximately log quadratic
and proportional to the posterior, $p(\theta \vert x)$.
This might be expected to occur when we have a large number
of independent data points informing each parameter---i.e.,
when a central limit theorem applies and the priors do not
affect the posterior.
In the full information likelihood, some
terms may be viewed as missing data, whereas in the Bayesian
model the same terms may be viewed as latent parameters,
but this does not prevent us from formally comparing the two methods.

We can draw a term-by-term analogy with
the equations in \cite{meng:1991:using}. We denote variables
from the SEM paper with a superscript ``$SEM$'' to avoid confusion.
MFVB does not differentiate between missing
data and parameters to be estimated, so our $\theta$ corresponds to
$(\theta^{SEM}, Y_{mis}^{SEM})$ in \cite{meng:1991:using}.
SEM is an asymptotic
theory, so we may assume that $(\theta^{SEM}, Y_{mis}^{SEM})$ have a
multivariate normal
distribution, and that we are interested in the mean and covariance of
$\theta^{SEM}$.

In the E-step of \cite{meng:1991:using}, we replace $Y_{mis}^{SEM}$ with
its conditional expectation given the data and other $\theta^{SEM}$.
This corresponds precisely to \eq{mvn_stable_point}, taking
$\theta_j = Y_{mis}^{SEM}$.  In the M-step, we find the maximum
of the log likelihood with respect to $\theta^{SEM}$, keeping
$Y_{mis}^{SEM}$ fixed at its expectation.  Since the mode
of a multivariate normal distribution is also its mean,
this, too, corresponds to \eq{mvn_stable_point}, now taking
$\theta_j = \theta^{SEM}$.

It follows that the MFVB and EM fixed point equations are the same;
i.e., our $M$ is the same as their $M^{SEM}$, and
our $\partial M / \partial m$ of \eq{dM_dm} corresponds
to the transpose of their $DM^{SEM}$, defined in \eqw{2.2.1}
of \cite{meng:1991:using}.  Since the ``complete information'' corresponds to
the variance of $\theta^{SEM}$ with fixed values for $Y_{OBS}^{SEM}$,
this is the same as our $\Sigma_{q^*,11}$, the variational covariance,
whose inverse is $I_{oc}^{-1}$.  Taken all together, this means that
equation (2.4.6) of \cite{meng:1991:using} can be
re-written as our \eq{lrvb_est}.
\begin{align*}
V^{SEM} =& I_{oc}^{-1} \left(I - DM^{SEM}\right)^{-1} \Rightarrow\\
\Sigma =& \vbcov \left(I - \left(\frac{\partial M}{\partial m^T}\right)^T \right)^{-1}
       = \left(I - \frac{\partial M}{\partial m^T} \right)^{-1} \vbcov
\end{align*}

\section{Multivariate normal mixture details} \label{app:mvn_details}

In this section we derive the basic formulas needed to calculate \eq{lrvb_est}
and \eq{influence_score} for a finite mixture of normals, which 
is the model used in \mysec{experiments}.  We will
follow the notation introduced in \mysec{normal_mix_model}.

Let each observation, $x_{n}$, be a $P\times1$ vector. We will denote
the $P$th component of the $n$th observation $x_{n}$, with a
similar pattern for $z$ and $\mu$. We will denote the $p$, $q$th
entry in the matrix $\Lambda_{k}$ as $\Lambda_{k,pq}$. The data
generating process is as follows:

\begin{eqnarray*}
\log P\left(x_{n}|z_{n},\mu,\Lambda\right) & = &
    \sum_{n=1}^{N}z_{nk}\log\phi_{k}(x_{n}) + \constant\\
\log\phi_{k}(x) & = & -\frac{1}{2}\left(x - \mu_{k}\right)^{T} \Lambda_{k}\left(x-\mu_{k}\right) + 
    \frac{1}{2}\log\left|\Lambda_{k}\right|+ \constant\\
\log P(z_{nk}|\pi_{k}) & = & \sum_{k=1}^{K}z_{nk}\log\pi_{k} + \constant\\
\log P(z,\mu,\pi,\Lambda | x) & = & \sum_{n=1}^{N}\sum_{k=1}^{K}z_{nk}\left(\log\pi_{k}-\frac{1}{2}\left(x_{n}-\mu_{k}\right)^{T}\Lambda_{k}\left(x_{n}-\mu_{k}\right)+\frac{1}{2}\log\left|\Lambda_{k}\right|\right)+ \constant
\end{eqnarray*}

In all our results we simply used
improper, flat priors, though it would be trivial to incorporate conjugate priors. 

From the assumptions in \eq{perturbation_assumptions},
the posterior expectation of $x^{*}$ will always have
$\mbe_q x^{*} = x$, so for notational convenience we can simply drop
the $*$ and apply the LRVB formulas as if $x$ were a random parameter.
However, it is useful to remember that the parameter $x$ is different
from the variable we condition on -- we are actually estimating
$p(\alpha, z, x^{*} | x)$.

The parameters $\mu_{k}$, $\Lambda_{k}$, $\pi$, and $z_{n}$ will
each be given their own variational distribution. By standard results,
the variational distributions will be:
\begin{eqnarray*}
q_{\mu_{k}} & = & \textrm{Multivariate Normal}\\
q_{\Lambda_{k}} & = & \textrm{Wishart}\\
q_{\pi} & = & \textrm{Dirichlet }\\
q_{z_{n}} & = & \textrm{Multinoulli (one multinomial draw)}\\
q_{x^*_{n}} & = & \textrm{Multivariate Normal}
\end{eqnarray*}

The sufficient statistics for $\mu_{k}$ are all terms of the form
$\mu_{kp}$ and $\mu_{kp}\mu_{kq}$. Consequently, the sub-vector
of $\theta$ corresponding to $\mu_{k}$ is 
\begin{eqnarray*}
\theta_{\mu_{k}} & = & \left(\begin{array}{c}
\mu_{k1}\\
\vdots\\
\mu_{kp}\\
\mu_{k1}\mu_{k1}\\
\mu_{k1}\mu_{k2}\\
\vdots\\
\mu_{kP}\mu_{kP}
\end{array}\right)
\end{eqnarray*}

We will only save one copy of $\mu_{kp}\mu_{kq}$ and $\mu_{kq}\mu_{kp}$,
so $\theta_{\mu_{k}}$ has length $P+\frac{1}{2}\left(P+1\right)P$.
For all the parameters, we denote the complete stacked vector without
a $k$ subscript:
\begin{eqnarray*}
\theta_{\mu} & = & \left(\begin{array}{c}
\theta_{\mu_{1}}\\
\vdots\\
\theta_{\mu_{K}}
\end{array}\right)
\end{eqnarray*}

The sufficient statistics for $x^*$ are analogous to those for $\mu$.

The sufficient statistics for $\Lambda_{k}$ are all the terms $\Lambda_{k,pq}$
and the term $\log\left|\Lambda_{k}\right|$. Again, since $\Lambda$
is symmetric, we do not keep redundant terms, so $\theta_{\Lambda_{k}}$
has length $1+\frac{1}{2}\left(P+1\right)P$.

The sufficient statistics for $\pi$ is the $K$-vector $\left(\log\pi_{1},...,\log\pi_{K}\right)$.

The sufficient statistics for $z$ is simply the $N\times K$ values
$z_{nk}$ themselves.

In terms of \mysec{influence}, we have
\begin{eqnarray*}
\alpha & = & \left(\begin{array}{c}
\theta_{\mu}\\
\theta_{\Lambda}\\
\theta_{\pi}
\end{array}\right)\\
z & = & \left(\begin{array}{c}
\theta_{z}\end{array}\right)\\
x & = & \left(  \theta_x \right)
\end{eqnarray*}

That is, we are primarily interested in the covariance of the sufficient
statistics of $\mu$, $\Lambda$, and $\pi$, $z$ are nuisance
parameters, and $x^*$ is the ``unobserved'' data.

To put the log likelihood in terms useful for LRVB, we must express
it in terms of the sufficient statistics, taking into account the
fact the $\theta$ vector does not store redundant terms (e.g. it
will only keep $\Lambda_{ab}$ for $a<b$ since $\Lambda$ is symmetric).

\begin{eqnarray*}
-\frac{1}{2}\left(x_{n}-\mu_{k}\right)^{T}\Lambda_{k}\left(x_{n}-\mu_{k}\right) & = & -\frac{1}{2}trace\left(\Lambda_{k}\left(x_{n}-\mu_{k}\right)\left(x_{n}-\mu_{k}\right)^{T}\right)\\
 & = & -\frac{1}{2}\sum_{a}\sum_{b}\left(\Lambda_{k,ab}\left(x_{n,a}-\mu_{k,a}\right)\left(x_{n,b}-\mu_{k,b}\right)\right)\\
 & = & -\frac{1}{2}\sum_{a}\sum_{b}\left(\Lambda_{k,ab}\mu_{k,a}\mu_{k,b}-\Lambda_{k,ab}x_{n,a}\mu_{k,b}-\Lambda_{k,ab}x_{n,b}\mu_{k,a}+\Lambda_{k,ab}x_{n,a}x_{n,b}\right)\\
 & = & -\frac{1}{2}\sum_{a}\Lambda_{k,aa}\left(\mu_{k}^{2}\right)^{a}+\sum_{a}\Lambda_{k,aa}x_{n,a}\mu_{k,a}-\frac{1}{2}\sum_{a}\Lambda_{k,aa}\left(x_{n}^{2}\right)^{2}-\\
 &  & \frac{1}{2}\sum_{a\ne b}\Lambda_{k,ab}\mu_{k,a}\mu_{k,b}+\sum_{a\ne b}\Lambda_{k,ab}x_{n,a}\mu_{k,b}-\frac{1}{2}\sum_{a\ne b}\Lambda_{k,ab}x_{n,a}x_{n,b}\\
 & = & -\frac{1}{2}\sum_{a}\Lambda_{k,aa}\left(\mu_{k}^{2}\right)^{a}+\sum_{a}\Lambda_{k,aa}x_{n,a}\mu_{k,a}-\frac{1}{2}\sum_{a}\Lambda_{k,aa}\left(x_{n}^{2}\right)^{2}-\\
 &  & \sum_{a<b}\Lambda_{k,ab}\mu_{k,a}\mu_{k,b}+\sum_{a<b}\Lambda_{k,ab}\left(x_{n,a}\mu_{k,b}+x_{n,b}\mu_{k,a}\right)-\sum_{a<b}\Lambda_{k,ab}x_{n,a}x_{n,b}
\end{eqnarray*}

The MFVB updates and covariances in $V$ are all given by properties of standard
distributions. To compute the LRVB corrections, it only remains to
calculate the hessian of $H$. These terms can be read directly off the posterior.
First we calculate derivatives with respect to components of $\mu$.

\begin{eqnarray*}
\frac{\partial^{2}H}{\partial\mu_{k,a}\partial\Lambda_{k,ab}} & = & \sum_{i}z_{nk}x_{n,b}\\
\frac{\partial^{2}H}{\partial\left(\mu_{k,a}\mu_{k,b}\right)\partial\Lambda_{k,ab}} & = & -\left(\frac{1}{2}\right)^{1(a=b)}\sum_{n}z_{nk}\\
\frac{\partial^{2}H}{\partial\mu_{k,a}\partial z_{nk}} & = & \sum_{b}\Lambda_{k,ab}x_{n,b}\\
\frac{\partial^{2}H}{\partial\left(\mu_{k,a}\mu_{k,b}\right)\partial z_{nk}} & = & -\left(\frac{1}{2}\right)^{1(a=b)}\Lambda_{k,ab}
\end{eqnarray*}

All other $\mu$ derivatives are zero. For $\Lambda$,

\begin{eqnarray*}
\frac{\partial^{2}H}{\partial\Lambda_{k,ab}\partial z_{nk}} & = & -\left(\frac{1}{2}\right)^{1(a=b)}\left(x_{n,a}x_{n,b}-\mu_{k,a}x_{n,b}-\mu_{k,b}x_{n,a}+\mu_{k,a}\mu_{k,b}\right)\\
\frac{\partial^{2}H}{\partial\log\left|\Lambda_{k}\right|\partial z_{nk}} & = & \frac{1}{2}
\end{eqnarray*}

The remaining $\Lambda$ derivatives are zero. The only nonzero second
derivatives for $\log\pi$ are to $Z$ and are given by 
\begin{eqnarray*}
\frac{\partial^{2}H}{\partial\log\pi_{j}\partial z_{nk}} & = & 1
\end{eqnarray*}

To calculate the influence scores, we additionally need second derivatives
involving $x$.

\begin{eqnarray*}
\frac{\partial^{2}H}{\partial x_{n,a}\partial\mu_{k,b}} & = & z_{nk}\Lambda_{k,ab}\\
\frac{\partial^{2}H}{\partial x_{n,a}\partial\Lambda_{k,ab}} & = & z_{nk}\mu_{k,b}\\
\frac{\partial^{2}H}{\partial x_{n,a}\partial z_{nk}} & = & \sum_{b}\mu_{k,b}\Lambda_{k,ab}\\
\frac{\partial^{2}H}{\partial\left(x_{n,a}x_{n,b}\right)\partial\Lambda_{k,ab}} & = & -z_{nk}\left(\frac{1}{2}\right)^{1\left(a=b\right)}\\
\frac{\partial^{2}H}{\partial\left(x_{n,a}x_{n,b}\right)\partial z_{nk}} & = & -\Lambda_{k,ab}\left(\frac{1}{2}\right)^{1(a=b)}
\end{eqnarray*}

All other second derivatives involving $x$ are zero.
Note in particular that $H_{zz} = 0$, allowing
efficient calculation of \eq{nuisance_lrvb_est}. 

\section{Influence scores} \label{app:influence_details}

\subsection{Influence score derivations}

In this section, we derive the formulas in \mysec{influence}.  We
will follow the notation there defined.
Consider the ``influence score'' given by the derivative of
the conditional expectation:

\begin{eqnarray*}
m_{\theta_{i}}\left(x_{n}\right) & = &
  \mbe_{p}\left[\theta_{i}\vert x_{1},...,x_{n},...,x_{N}\right]\\
\frac{d}{dx_{n}}m_{\theta_{i}}\left(x_{n}\right) &:=& m_{\theta_{i}}'\left(x_{n}\right)
\end{eqnarray*}

A Taylor expansion of $m_{\theta_{i}}\left(x^*_{n}\right)$ around $x_{n}$ gives
\begin{eqnarray*}
m_{\theta_{i}}\left(x_{n}^{*}\right) 
&= & m_{\theta_{i}} \left(x_{n}\right) + m_{\theta_{i}}' \left(x_{n} \right)
             \left(x_{n}^{*} - x_{n}\right) + \\
  && \quad\quad O\left(\left(x_{n}^{*}-x_{n}\right)^{2}\right)
\end{eqnarray*}
Multiplying both sides by $\left(x_{n}^{*}-x_{n}\right)$ and taking
expectations conditional on $x$ gives
\begin{align*}
&\mbe\left[ \left(m_{\theta_{i}} \left(x_{n}^{*}\right) - m_{\theta_{i}}\left(x_{n}\right)\right)
           \left(x_{n}^{*} - x_{n}\right)\vert x\right] \\ 
&\quad = m_{\theta_{i}}'\left(\theta_{i}, x_{n}\right)
           \mbe\left[\left(x_{n}^{*} - x_{n}\right)^{2}\vert x\right] +  O\left(\left(x_{n}^{*}-x_{n}\right)^{3}\right)
\end{align*}
On the left side,
\begin{align*}
&\mbe\left[\left(m_{\theta_{i}}\left(x_{n}^{*}\right) - m_{\theta_{i}}\left(x_{n}\right)\right)
          \left(x_{n}^{*}-x_{n}\right)\vert x\right]  \\
&\quad =  \mbe\left[\mbe\left[\left(m_{\theta_{i}}\left(x_{n}^{*}\right) - m_{\theta_{i}}\left(x_{n}\right)\right)
    \left(x_{n}^{*} - x_{n}\right)\vert x_{n}^{*}\right] \vert x\right]\\
&\quad =  \mbe\left[\mbe\left[\left(\mbe\left[\theta_{i}\vert x_{1},.,x_{n}^{*},.,x_{N}\right] -
      m_{\theta_{i}}\left(x_{n}\right)\right)
      \left(x_{n}^{*}-x_{n}\right)\vert x_{n}^{*}\right]\vert x\right]\\
&\quad = \mbe\left[\mbe\left[\left(\theta_{i} - m_{\theta_{i}} \left(x_{n}\right)\right)
                     \left(x_{n}^{*} - x_{n}\right)\vert x_{n}^{*}\right]\vert x\right]\\
&\quad =  \mbe\left[\left(\theta_{i} - m_{\theta_{i}}\left(x_{n}\right)\right)
              \left(x_{n}^{*} - x_{n}\right)\vert x\right]\\
&\quad = Cov\left(\theta_{i}, x_{n}^{*}\vert x\right)
\end{align*}

On the right side,
\begin{align*}
&m_{\theta_{i}}' \left(\theta_{i}, x_{n}\right)
    \mbe\left[\left(x_{n}^{*} - x_{n}\right)^{2}\vert x\right] + 
    O\left(\left(x_{n}^{*}-x_{n}\right)^{3}\right) \\
&\quad = m_{\theta_{i}}'\left(\theta_{i}, x_{n}\right)\epsilon + O\left(\epsilon^{\frac{3}{2}}\right)
\end{align*}

So that
\begin{eqnarray*}
m_{\theta_{i}}'\left(\theta_{i}, x_{n}\right) & = &
    \frac{1}{\epsilon}Cov\left(\theta_{i}, x_{n}^{*}\vert x\right) +
    O\left(\epsilon^{\frac{3}{2}}\right)
\end{eqnarray*}
...which is \eq{influence_as_derivative}.

We now assume that our parameter space can be divided into types of
variables: $\alpha$ and $z$ as before, and $x$, the perturbed data.
As before, we also assume that each has its own variational distribution.

\begin{eqnarray*}
\theta & = & \left(\begin{array}{c}
\alpha\\
x\\
z
\end{array}\right)\\
\Sigma & = & \left[\begin{array}{ccc}
\Sigma_{\alpha} & \Sigma_{\alpha x^{*}} & \Sigma_{\alpha z}\\
\Sigma_{x^{*}\alpha} & \Sigma_{x^{*}} & \Sigma_{x^{*}z}\\
\Sigma_{z\alpha} & \Sigma_{zx^{*}} & \Sigma_{z}
\end{array}\right]
\end{eqnarray*}

As before, we use a similar partition for $V$ and $H$. Specifically,
\begin{eqnarray*}
V & = & \left[\begin{array}{ccc}
V_{\alpha} & 0 & 0\\
0 & V_{x^{*}} & 0\\
0 & 0 & V_{z}
\end{array}\right]\\
H & = & \left[\begin{array}{ccc}
H_{\alpha} & H_{\alpha x^{*}} & H_{\alpha z}\\
H_{x^{*}\alpha} & H_{x^{*}} & H_{x^{*}z}\\
H_{z\alpha} & H_{zx^{*}} & H_{z}
\end{array}\right]
\end{eqnarray*}

We are interested in $\Sigma_{\alpha x^{*}}$, the covariance between
$\alpha$ and $x$, which can be interpreted as influence scores.

The matrix $\Sigma_{x^{*}}$ is the result of an infinitesimal perturbation,
and so will be nearly zero. We will write
\begin{eqnarray*}
\Sigma_{x^{*}} & = & \epsilon S_{x^{*}}
\end{eqnarray*}
Note that $S_{x^{*}}$ is not necessarily diagonal if the variational
distribution for each datapoint $x_{n}$ is multidimensional. Applying
formula \eq{nuisance_lrvb_est} to eliminate $z$, we have
\begin{eqnarray*}
\left[\begin{array}{cc}
\Sigma_{\alpha} & \Sigma_{\alpha x^{*}}\\
\Sigma_{x^{*}\alpha} & \Sigma_{x^{*}}
\end{array}\right] & = & \left[\left(\begin{array}{cc}
I_{\alpha}-V_{\alpha}H_{\alpha} & -V_{\alpha}H_{\alpha x^{*}}\\
-\epsilon S_{x^{*}}H_{x^{*}q} & I_{x^{*}}
\end{array}\right)-\left(\begin{array}{c}
V_{\alpha}H_{\alpha z}\\
\epsilon S_{x^{*}}H_{x^{*}z}
\end{array}\right)Q_{z}\left(\begin{array}{cc}
V_{z}H_{z\alpha} & V_{z}H_{zx^{*}}\end{array}\right)\right]^{-1}\left(\begin{array}{cc}
V_{\alpha} & 0\\
0 & \epsilon S{}_{x^{*}}
\end{array}\right)\\
 & = & \left[\left(\begin{array}{cc}
I_{\alpha}-V_{\alpha}H_{\alpha} & -V_{\alpha}H_{\alpha x^{*}}\\
-\epsilon S_{x^{*}}H_{x^{*}q} & I_{x^{*}}
\end{array}\right)-\left(\begin{array}{cc}
V_{\alpha}H_{\alpha z}Q_{z}V_{z}H_{z\alpha} & V_{\alpha}H_{\alpha z}Q_{z}V_{z}H_{zx^{*}}\\
\epsilon S_{x^{*}}H_{x^{*}z}Q_{z}V_{z}H_{z\alpha} & \epsilon S_{x^{*}}H_{x^{*}z}Q_{z}V_{z}H_{z\alpha}
\end{array}\right)\right]^{-1}\left(\begin{array}{cc}
V_{\alpha} & 0\\
0 & \epsilon S{}_{x^{*}}
\end{array}\right)\\
 & = & \left[\begin{array}{cc}
I_{\alpha}-V_{\alpha}H_{\alpha}-V_{\alpha}H_{\alpha z}Q_{z}V_{z}H_{z\alpha} & -\left(V_{\alpha}H_{\alpha x^{*}}+V_{\alpha}H_{\alpha z}Q_{z}V_{z}H_{zx^{*}}\right)\\
-\epsilon\left(S_{x^{*}}H_{x^{*}q}+S_{x^{*}}H_{x^{*}z}Q_{z}V_{z}H_{z\alpha}\right) & I_{x^{*}}-\epsilon S_{x^{*}}H_{x^{*}z}Q_{z}V_{z}H_{z\alpha}
\end{array}\right]^{-1}\left(\begin{array}{cc}
V_{\alpha} & 0\\
0 & \epsilon S{}_{x^{*}}
\end{array}\right)\\
 & = & \left[\begin{array}{cc}
I_{\alpha}-V_{\alpha}H_{\alpha}-V_{\alpha}H_{\alpha z}Q_{z}V_{z}H_{z\alpha} & -Q_{\alpha x^{*}}\\
-\epsilon Q_{x^{*}\alpha} & I_{x^{*}}-\epsilon Q_{x^{*}}
\end{array}\right]^{-1}\left(\begin{array}{cc}
V_{\alpha} & 0\\
0 & \epsilon S{}_{x^{*}}
\end{array}\right)\\
\end{eqnarray*}

In the last step we have defined a some placeholder matrices
called $Q$ to simplify subsequent expressions:

\begin{eqnarray*}
Q_{z} & := & \left(I_{z}-V_{z}H_{z}\right)^{-1}\\
Q_{\alpha x^{*}} & := & V_{\alpha}H_{\alpha x^{*}}+V_{\alpha}H_{\alpha z}Q_{z}V_{z}H_{zx^{*}}\\
Q_{x^{*}\alpha} & := & S_{x^{*}}H_{xq}+S_{x^{*}}H_{x^{*}z}Q_{z}V_{z}H_{z\alpha}\\
Q_{x^{*}} & := & S_{x^{*}}H_{x^{*}z}Q_{z}V_{z}H_{z\alpha}
\end{eqnarray*}

This may appear to be a complicated expression, but it can be considerably
simplified by using the fact that $\Sigma_{\alpha x^{*}}\propto\epsilon$
and $\epsilon\approx0$, which allows us to eliminate all $\epsilon$
terms that are second-order or higher. We can also use the Taylor
expansion of the matrix inverse that gives, for $\epsilon$ small,
and invertible matrix $A$,
\begin{eqnarray*}
\left(I-\epsilon B\right)^{-1} & = & I+\epsilon B+O\left(\epsilon^{2}\right)
\end{eqnarray*}
Again applying a Schur complement, we can write the expression for
the upper-left corner:
\begin{eqnarray*}
\Sigma_{\alpha} & = & \left(I_{\alpha}-V_{\alpha}H_{\alpha}-V_{\alpha}H_{\alpha z}Q_{z}V_{z}H_{z\alpha}-\epsilon Q_{\alpha x^{*}}\left(I_{x^{*}}-\epsilon Q_{x^{*}}\right)^{-1}Q_{x^{*}\alpha}\right)^{-1}V_{\alpha}\\
 & = & \left(I_{\alpha}-V_{\alpha}H_{\alpha}-V_{\alpha}H_{\alpha z}Q_{z}V_{z}H_{z\alpha}\right)^{-1}V_{\alpha}+O\left(\epsilon\right)
\end{eqnarray*}
Note that as $\epsilon\rightarrow0$, this gives the ordinary LRVB
estimate for $\Sigma_{\alpha}$, as expected. Infinitesimal perturbations
to our data do not change our beliefs about the posterior covariance.
Next, the Schur complement formula for the upper right corner gives
\begin{eqnarray*}
\Sigma_{\alpha x^{*}} & = & \epsilon\Sigma_{\alpha}^{-1}\left(Q_{\alpha x^{*}}\left(I_{x^{*}}-\epsilon Q_{x^{*}}\right)^{-1}\right)S_{x^{*}}\\
 & = & \epsilon\Sigma_{\alpha}^{-1}\left(Q_{\alpha x^{*}}\left(I_{x^{*}}+\epsilon Q_{x^{*}}+O\left(\epsilon^{2}\right)\right)\right)S_{x^{*}}\\
 & = & \epsilon\Sigma_{\alpha}^{-1}Q_{\alpha x^{*}}S_{x^{*}}+O\left(\epsilon^{2}\right)\\
 & = & \epsilon\Sigma_{\alpha}^{-1}\left(V_{\alpha}H_{\alpha x^{*}}+V_{\alpha}H_{\alpha z}\left(I_{z}-V_{z}H_{z}\right)^{-1}V_{z}H_{zx^{*}}\right)S_{x^{*}}+O\left(\epsilon^{2}\right)
\end{eqnarray*}

Taking limits gives \eq{influence_score}.
\end{document}